\pgfplotsset{compat=1.14}
\def\BibTeX{{\rm B\kern-.05em{\sc i\kern-.025em b}\kern-.08em
    T\kern-.1667em\lower.7ex\hbox{E}\kern-.125emX}}
    \DeclareMathOperator{\tr}{tr}
       \DeclareMathOperator{\diag}{diag}
        \DeclareMathOperator{\var}{var}   
\theoremstyle{plain}
\newtheorem{theorem}{Theorem}
\newtheorem{assumption}{Assumption}
\newtheorem{proposition}[theorem]{Proposition}
\theoremstyle{definition}
\theoremstyle{remark}
\newtheorem*{remark}{Remark}
\newtheoremstyle{specialcasestyle}{1mm}{1mm}{\upshape}{}{\bfseries\upshape}{.}{0mm}{}
\theoremstyle{specialcasestyle}
\newcommand{\figref}[1]{Fig.~\protect\ref{#1}}
\newcommand{\bmu}{\boldsymbol{\mu}}
\newcommand{\bSig}{\boldsymbol{\Sigma}}
\newcommand{\bu}{{\bf u}}
\newcommand{\bg}{{\bf g}}
\newcommand{\bV}{{\bf V}}
      \newcommand{\bH}{{\bf H}}
         \newcommand{\bD}{{\bf D}}
                  \newcommand{\bN}{{\bf N}}
      \newcommand{\bx}{{\bf x}}
            \newcommand{\bv}{{\bf v}}
                    \newcommand{\be}{{\bf e}}
           \newcommand{\by}{{\bf y}}
          \newcommand{\bw}{{\bf w}}
      \newcommand{\bz}{{\bf z}}
               \newcommand{\bX}{{\bf X}}
      \newcommand{\bB}{{\bf B}}
           \newcommand{\bI}{{\bf I}}
       \newcommand{\bC}{{\bf C}}
                    \newcommand{\bM}{{\bf M}}
                \newcommand{\bZ}{{\bf Z}}
                  \newcommand{\bE}{{\bf E}}
                     \newcommand{\bOmega}{{\boldsymbol\Omega}}
                   \newcommand{\asto}{\overset{a. s.}\longrightarrow }
                            \newcommand{\dto}{\overset{d}\longrightarrow }
\begin{document}

\title{
\huge{High-Dimensional Quadratic Discriminant Analysis\\ under Spiked Covariance Model}}

\author{Houssem Sifaou,~\IEEEmembership{Student Member,~IEEE}, Abla Kammoun,~\IEEEmembership{Member,~IEEE}~ and~Mohamed-Slim~Alouini,~\IEEEmembership{Fellow,~IEEE}
\thanks{H. Sifaou, A. Kammoun and M.S. Alouini are with the Computer, Electrical, and Mathematical Sciences and Engineering (CEMSE) Division, KAUST, Thuwal, Makkah Province, Saudi Arabia (e-mail: houssem.sifaou@kaust.edu.sa, abla.kammoun@kaust.edu.sa, slim.alouini@kaust.edu.sa).}
}
\maketitle

\begin{abstract}
Quadratic discriminant analysis (QDA) is a widely used classification technique that generalizes the linear discriminant analysis (LDA) classifier to the case of distinct covariance matrices among classes. For the QDA classifier to yield high classification performance, an accurate estimation of the covariance matrices is required. Such a task becomes all the more challenging in high dimensional settings, wherein the number of observations is comparable with the feature dimension. A popular way to enhance the performance of QDA classifier under these circumstances is to regularize the covariance matrix,  giving the name regularized QDA (R-QDA) to the corresponding classifier. In this work, we consider the case in which the population covariance matrix has a spiked covariance structure, a model that is often assumed in several applications. Building on the classical QDA, we propose a novel quadratic classification technique, the parameters of which are chosen such that the fisher-discriminant ratio is maximized. Numerical simulations show that the proposed classifier not only outperforms the classical R-QDA for both synthetic and real data but also requires lower computational complexity, making it suitable to high dimensional settings.
\end{abstract}

\begin{IEEEkeywords}
High-Dimensional Data, Quadratic Discriminant Analysis, Random Matrix Theory, Spiked Covariance Models.
\end{IEEEkeywords}

\maketitle

\section{Introduction}

Classification is among the most typical examples of supervised learning techniques. When the data is normally distributed with common covariance matrices across classes, linear discriminant analysis (LDA) is known to be the optimal classifier in terms of misclassification rate minimization. In the case of different covariances across classes, it has recently been shown that the use of LDA does not enable to leverage the information on the differences between covariance matrices \cite{Elkhalil2017b}. Under such circumstances, it can be more advisable to employ the quadratic discriminant analysis (QDA), which turns out to be the optimal classifier under Gaussian data and known statistics. In practical scenarios, the covariance matrices and the means associated with each class are not perfectly known. They are often estimated based on the available training data for which the class label associated with each observation is provided. If the number of training samples $n$ and their dimensions $p$ are commensurable, a situation widely met in numerous applications such as medical imaging \cite{Friston}, functional data analysis \cite{Ramsay}, meteorology and oceanography \cite{Preisendorfer}, many estimators of the covariance matrices such as the sample covariance matrix become highly inaccurate. A typical extreme scenario corresponds to the case   $n<p$, in which the sample covariance matrix becomes singular, and as such, cannot be used as a plug-in estimator of the covariance matrix since the QDA classifier involves the computation of the inverse covariance matrix. To get around this issue, it was proposed to use instead, a regularized covariance matrix estimator that linearly shrinks through the use of a scalar regularization parameter the sample covariance matrix towards identity \cite{Friedman1989}. The corresponding classifier is referred to as regularized QDA (R-QDA). This regularization appoach has been used succefully in several applications \cite{ye2006regularized,xiong2018mathcal,bian2017early}. However, QDA and R-QDA remain widely unused in high-dimensional settings, being very sensitive to the estimation quality of the covariance matrix \cite{Zollanvari2015}.  

In this work, we consider a high-dimensional setting in which the number of observations is assumed to scale with their dimensions. We further assume that the population covariance matrix associated with each class is a low-rank perturbation of a scaled identity; that is, it is isotropic except for a finite number of symmetry-breaking directions.
Such a model is used in many real applications such as detection\cite{ZHAO19861}, electroencephalogram (EEG) signals\cite{Davidson2009,FAZLI20112100}, and financial econometrics\cite{Passemier2017,KRITCHMAN2008}, and is known in the random matrix theory terminology as the spiked covariance model.  
Based on this model, we propose to employ for each class a  parametrized covariance matrix estimator following the same model as the population covariance matrix. The parameters correspond to the largest eigenvalues, which are optimized to maximize the classifier performance. More specifically, by leveraging tools from random matrix theory, we compute the asymptotic Fisher ratio in the regime $n$ and $p$ growing to infinity at the same pace. Closed-form expressions of the optimal parameters that maximize the asymptotic Fisher ratio are then provided. The approach consisting of exploiting the spiked structure of the covariance matrix has mainly been considered in signal processing applications \cite{yang} and \cite{donoho}. It has only recently been used for the classification problem in our work in   \cite{Sifaou2018a,JMLR:v21:19-428}, wherein a similar approach is applied to find an improved LDA classifier under the spiked covariance model assumption. Considering a QDA based classifier is needed when the covariance matrices between classes are different. It is also more challenging since it involves an involved quadratic statistic, the statistical properties of which are much harder to characterize.   

The proposed classifier is compared with the regularized QDA (R-QDA) classifier \cite{Friedman1989} using both real and synthetic data. The proposed classifier outperforms the classical R-QDA classifier while requiring less computational complexity. As shown next in the paper,  the proposed classifier involves a statistic that avoids computing the inverse of the covariance matrix. Moreover, since the parameters are obtained in closed-form, it avoids the grid search or the cross-validation approach needed to determine the optimal regularization parameter of the R-QDA classifier\cite{Friedman1989}.

The remainder of this paper is organized as follows. In the next section, a brief overview of QDA and R-QDA classifiers is provided. Section \ref{Improved_QDA} details the steps of the design of our proposed classifier. The performance of the proposed classifier is studied in section \ref{Numerical_Simulations}, and some concluding remarks are drawn in section \ref{conclusion}.

\subsection{Notations}
Throughout this work, boldface lower case is used for denoting column vectors, $\bx$, and upper case for matrices, $\bX$. $\bX^T$ denotes the transpose. Moreover, $\bI_p$, $\boldsymbol{0}_p$ and $\boldsymbol{1}_p$ denote the identity matrix, the all-zero vector and all-one vector of size $p$ respectively. $\left|\bX\right|$ and $\tr\left(\bX\right)$ denote the determinant and the trace of $\bX$ respectively. $\left\{x_j\right\}_{j=1}^r$ is used to denote the row vector with entries $x_j$ whereas $\|.\|$ is used to denote the $\ell_2$-norm. The almost sure convergence and the convergence in distribution of random variables will be denoted as $\asto$ and $\overset{d}\longrightarrow$ receptively.

\section{ Quadratic Discriminant Analysis}
Consider $\bx_1,\cdots,\bx_n$ observations of size $p$ belonging to two different classes $\mathcal{C}_0$ and $\mathcal{C}_1$ with $n_i$ observations belonging to class $\mathcal{C}_i$. For notational convenience, we denote by $\mathcal{T}_i$ the set of indexes of the observations belonging to class $\mathcal{C}_i$.
We assume that $\bx_\ell \in \mathcal{C}_i, i\in\{0,1\}$, is drawn from a Gaussian distribution with mean $\bmu_i$ and covariance $\bSig_i$. In this work, a 'spiked model' is assumed for the covariance matrices. Under this assumption, for $i\in\left\{0,1\right\}$,  $\bSig_i$ is written as: %i. e., $\bSig_i$ is isotropic except for a finite number of orthogonal symmetry breaking directions, \cite{Hoyle2003,Reimann1996}, that is,
\begin{equation}
\bSig_i=\sigma_i^2\bI_p+ \sigma_i^2 \sum_{j=1}^{r_i}\lambda_{j,i}\bv_{j,i}\bv_{j,i}^T,   
\label{eq:sig}
\end{equation}
where $\sigma_i^2>0$, $\lambda_{1,i} \geq \cdots,\geq \lambda_{r_i,i}>0$ and $\bv_{1,i},\cdots,\bv_{r_i,i}$ are orthonormal.
\begin{remark}
The assumed model of the covariance matrices is encountered in many real applications such as detection\cite{ZHAO19861}, EEG signals\cite{Davidson2009,FAZLI20112100}, and financial econometrics\cite{Passemier2017,KRITCHMAN2008}.\end{remark}
The starting point of our work is the classical QDA classifier whose discriminant function is given by:
\begin{equation}
\begin{aligned}
W^{\rm QDA}(\bx)&=\eta^{\rm QDA}-\frac{1}{2} (\bx-\bmu_0)^T \bSig_0^{-1} (\bx-\bmu_0) +\frac{1}{2} (\bx-\bmu_1)^T \bSig_1^{-1} (\bx-\bmu_1),
\end{aligned}
\label{discriminant_function_QDA}
\end{equation}
where $\eta^{\rm QDA}=-\frac{1}{2}\log\frac{|\bSig_0|}{|\bSig_1|}-\log\frac{\pi_1}{\pi_0}$ and $\pi_i$ is the prior probability for class $\mathcal{C}_i$. An observation $\bx$ is classified to $\mathcal{C}_0$ if the discriminant function $W^{\rm QDA}(\bx)$ is positive and to class $\mathcal{C}_1$ otherwise.
In practice, the mean vectors and covariance matrices are unknown and are usually replaced by their empirical estimates. For notational convenience, we define the sample mean and the sample covariance matrix of class $i \in \left\{0,1\right\}$, respectively as:
\begin{align*}
&\overline\bx_i=\frac{1}{n_i} \sum_{\ell\in\mathcal{T}_i}  \bx_\ell,\\
&\hat\bSig_i=\frac{1}{n_i-1} \sum_{\ell\in\mathcal{T}_i}  (\bx_\ell -\overline\bx_i)(\bx_\ell -\overline\bx_i)^T.
\end{align*}
It is the case in many real data sets that the dimension of the observations is of the same order of magnitude if not higher than their numbers, which makes the sample covariance matrix $\hat\bSig_i$ ill-conditioned. 
%In many real datasets, the size of the observations is bigger than their number, which makes the sample covariance matrices singular. 
To overcome this issue, ridge estimators of the inverse of the covariance
matrix are used \cite{hastie01statisticallearning,Zollanvari2015}:
\begin{equation}
\bH_i=\left( \bI_p+ \gamma \hat \bSig_i \right)^{-1}, \ \  \gamma>0.
\end{equation}
Replacing $\bSig_i$ by $\bH_i$ into \eqref{discriminant_function_QDA} yields the R-QDA classifier, the statistic of which is given by:
% thus  yielding the R-QDA classifier,
\begin{equation}
\begin{aligned}
\hat W^{\rm R-QDA}(\bx)&=\eta^{\rm R-QDA}-\frac{1}{2} (\bx-\overline\bx_0)^T \bH_0^{-1} (\bx-\overline\bx_0) +\frac{1}{2} (\bx-\overline\bx_1)^T \bH_1^{-1} (\bx-\overline\bx_1),
\end{aligned}
\label{discriminant_function_RQDA}
\end{equation}
where $\eta^{\rm R-QDA}=-\frac{1}{2}\log\frac{|\bH_1|}{|\bH_0|} -\log\frac{\pi_1}{\pi_0}$.
The classification error of R-QDA corresponding to class $i$ can be written as,
\begin{align*}
\epsilon^{\rm R-QDA}_i&=\mathbb{P}\left[(-1)^i \hat W^{\rm R-QDA}(\bx) <0 | \bx\in \mathcal{C}_i\right],
\end{align*}
The global classification error is given by,
\begin{equation}
\epsilon^{\rm R-QDA}=\pi_0\epsilon^{\rm R-QDA}_0+\pi_1\epsilon^{\rm R-QDA}_1.
\label{classificationError}
\end{equation}
 The optimal parameter of R-QDA classifier $\gamma^*$, that minimizes the global classification error, is generally computed by comparing the performance of a few candidate  values using a cross-validation method \cite{Friedman1989}.

\section{ Improved QDA}
\label{Improved_QDA}
\subsection{Proposed classification rule}

In this section, we propose an improved QDA classifier that leverages the structure of the covariance matrix model in \eqref{eq:sig}. For simplicity, we assume that $\sigma_i^2$ and $r_i$ are perfectly known. In practice, there exist several efficient algorithms in the literature for the estimation of these parameters. For more details, we refer the reader to the following works \cite{KRITCHMAN2008,Johnstone2009,Ulfarsson2008,Passemier2017}. 

Let $ \hat \bSig_i =\sum_{j=1}^{p} s_{j,i} \bu_{j,i}\bu_{j,i}^T,$ be  the eigenvalue decomposition of the sample covariance matrix corresponding to class $i$,
with $s_{j,i}$ is the $j$-th largest eigenvalue of $ \hat \bSig_i$ and $\bu_{j,i}$ its corresponding eigenvector. We look for an inverse covariance matrix estimator that possesses the same eigenvector basis. It can be thus written as:
$$
\hat \bC_i^{-1}=\sum_{j=1}^{p} t_{j,i}\bu_{j,i}\bu_{j,i}^T,
$$
where $t_{j,i}$ are some parameters to be designed. In accordance with the covariance matrix model in \eqref{eq:sig}, it is natural to set $t_{p-r,i}=\cdots=t_{p,i}=1/\sigma_i^2$. Such operation allows to shrink the covariance matrix estimator towards the structure described by \eqref{eq:sig}, giving it the name of  a shrinkage estimator \cite{Daniels2001}.
%Since the eigenvalues of the true covariance matrix are $\sigma_i^2(1+\lambda_{1,i}),\cdots,\sigma_i^2(1+\lambda_{r_i,i}),\sigma_i^2,\cdots,\sigma_i^2$, it is natural to set $t_{p-r,i}=\cdots=t_{p,i}=1/\sigma_i^2$. 
Thus, the inverse of the covariance matrix can be estimated as,
\begin{equation}
\hat \bC_i^{-1}=\frac{1}{\sigma_i^2}\left(\bI_p+\sum_{j=1}^{r_i} w_{j,i}  \bu_{j,i}\bu_{j,i}^T\right),
\label{cov_estimate}
\end{equation}
where $w_{j,i}=\sigma_i^2t_{j,i}-1$. In the sequel, we work with $w_{j,i}$ as the considered optimization variables. For notational convenience, we define $\bw=[w_{1,1},\cdots,w_{r_1,1},w_{1,0},\cdots,w_{r_0,0}]^T$. 
Our analysis relies on an asymptotic analysis of the behavior of the proposed QDA classifier. The asymptotic regime that is considered in our work is described in the following assumption: 
\begin{assumption} Throughout this work, we assume that, for $i\in\left\{0,1\right\}$, 
\begin{itemize}
\item[] (i) $n_i,p \asto \infty$, with fixed ratio $c_i=p/n_i$.
\item[] (ii) $r_i$ is fixed and $\lambda_{1,i}>\cdots>\lambda_{r_i,i}>\sqrt{c_i}$, independently of $p$ and $n_i$.
\item[] (iii) The spectral norm of $\bSig_i$, $\|\bSig_i\|$ are bounded, that is $\|\bSig_i\|=O(1)$.
\item[] (iv) The mean difference vector $\boldsymbol{\mu}\triangleq \boldsymbol{\mu}_1-\boldsymbol{\mu}_0$ has a bounded Euclidean norm, that is $\|\boldsymbol{\mu}\| =O(1)$.
\item[] (v) $\sigma_i^2=O(1)$ and $\sigma_0^2-\sigma_1^2=O(1/p)$.
\end{itemize}
\label{asymptotic_regime_assump}
\end{assumption}
\begin{remark}
\begin{itemize}
\item Assumption {\it (i)} is a key assumption that is generally in the framework of the theory of large random matrices. 
\item Assumption {\it (ii)} is fundamental in our analysis since it guarantees, as per standard results from random matrix theory, the one-to-one mapping between the sample eigenvalues $s_{j,i}$ and the unknown $\lambda_{j,i}$. In fact, when $\lambda_{j,i}>\sqrt{c_i}$, $\lambda_{j,i}$ can be consistently estimated using $s_{j,i}$ as we will see later. In the case where $\lambda_{j,i}\leq \sqrt{c_i}$, the relation between $s_{j,i}$ and $\lambda_{j,i}$ no longer holds and $\lambda_{j,i}$ cannot be estimated \cite{baik2005,Couillet2011}. 
\item Assumption {\it (v)} is a technical assumption, under which $\tr\left(\bSig_1-\bSig_0\right)=O(1)$. Moreover, from \eqref{eq:sig}, it ensures that the low-rank perturbation in $\bSig_i$ has a non-negligible contribution in $\tr\left(\bSig_1-\bSig_0\right)$. This is a key assumption that is needed for the parameter vector ${\bf w}$ to be asymptotically relevant for the classification.  
\end{itemize}
\end{remark}
Using the proposed covariance estimator, the discriminant function associated with the proposed classifier is given as:
\begin{equation}
\begin{aligned}
\hat W^{\rm Imp-QDA}(\bx)=&\eta-\frac{1}{2} (\bx-\bmu_0)^T \hat\bC_0^{-1} (\bx-\bmu_0) +\frac{1}{2} (\bx-\bmu_1)^T \hat\bC_1^{-1} (\bx-\bmu_1),
\end{aligned}
\label{discriminant_function_impQDA}
\end{equation}
where $\eta$ accounts for an additional bias; the way it is selected will be shown later. %
Let ${\bf x}$ be a testing observation belonging to class $\mathcal{C}_i$. Then, ${\bf x}=\bmu_i+\bSig_i^{\frac{1}{2}}{\bf z}$ with ${\bf z}\sim\mathcal{N}({\bf 0},{\bf I}_p)$. The classification error corresponding to class $\mathcal{C}_i$ can be written as,
\begin{align}
\epsilon^{\rm Imp-QDA}_i&=\mathbb{P}\left[(-1)^i \hat W^{\rm Imp-QDA}(\bx) <0 | \bx\in \mathcal{C}_i\right],\\
&=\mathbb{P}\left[(-1)^iY_i(\hat\bC_0,\hat\bC_1) < 0 | \bz \sim \mathcal{N}(\boldsymbol{0},\bI_p)\right],
\end{align}
where
\begin{equation}
Y_i(\hat\bC_0,\hat\bC_1)=\bz^T\bB_i\bz+2\by_i^T\bz -\xi_i,
\label{Y_i}
\end{equation}   
with
{\small
\begin{align*}
 \bB_i=&\bSig_i^{\frac{1}{2}} \left(\hat\bC_1^{-1}-\hat\bC_0^{-1}\right) \bSig_i^{\frac{1}{2}},\\
\by_i=& \bSig_i^{\frac{1}{2}} \left[\hat\bC_1^{-1} (\bmu_i-\overline\bx_1)- \hat\bC_0^{-1} (\bmu_i-\overline\bx_0)\right],\\
\xi_i=& -2\eta+ (\bmu_i-\overline\bx_0)^T\hat\bC_0^{-1}(\bmu_i-\overline\bx_0)-(\bmu_i-\overline\bx_1)^T\hat\bC_1^{-1}(\bmu_i-\overline\bx_1),
\end{align*}
%Since a spiked covariance model is assumed,
%a consistent estimator of the term $\log\frac{\left|\bSig_0\right|}{\left|\bSig_1\right|}=p\log(\sigma_0^2/\sigma_1^2)+\sum_{j=1}^{r_0}\log(1+\lambda_{j,0})-\sum_{j=1}^{r_1}\log(1+\lambda_{j,1})$ can be directly obtained by replacing $\{\lambda_{j,i}\}$ by their consistent estimators given in Proposition \ref{}.
\begin{proposition} Under the conditions ${\it (i)}$, ${\it (iii)}$ and ${\it (v)}$ of Assumption 1, we have
\begin{align*}
 Y_i(\hat\bC_0,\hat\bC_1) - \tilde Y_i\dto 0
\end{align*}
where
\begin{align*}
\tilde Y_i=p\sigma_i^2\left(\frac{1}{\sigma_1^2}-\frac{1}{\sigma_0^2}\right)+\nu_i+2\by_i^T\bz-\xi_i,
\end{align*}
with $\nu_i= \frac{1}{\sigma_1^2}\sum_{j=1}^{r_1} w_{j,1}( \tilde\bz_i^T\bu_{j,1})^2 -\frac{1}{\sigma_0^2} \sum_{j=1}^{r_0} w_{j,0} ( \tilde\bz_i^T  \bu_{j,0})^2$ and $\tilde\bz_i=\bSig_i^{\frac{1}{2}}\bz$.
\label{dist_conv}
\end{proposition}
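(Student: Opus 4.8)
The plan is to observe that $Y_i$ and $\tilde Y_i$ already share the linear and constant pieces $2\by_i^T\bz-\xi_i$, so the entire question reduces to analysing the quadratic form $\bz^T\bB_i\bz$. First I would substitute the explicit structure \eqref{cov_estimate} of the estimators into $\bB_i=\bSig_i^{1/2}(\hat\bC_1^{-1}-\hat\bC_0^{-1})\bSig_i^{1/2}$. Writing $\hat\bC_1^{-1}-\hat\bC_0^{-1}=\left(\frac{1}{\sigma_1^2}-\frac{1}{\sigma_0^2}\right)\bI_p+\frac{1}{\sigma_1^2}\sum_{j=1}^{r_1}w_{j,1}\bu_{j,1}\bu_{j,1}^T-\frac{1}{\sigma_0^2}\sum_{j=1}^{r_0}w_{j,0}\bu_{j,0}\bu_{j,0}^T$ and using $\tilde\bz_i=\bSig_i^{1/2}\bz$, the expansion splits exactly into an isotropic part and the rank-$(r_0+r_1)$ correction, yielding the identity $\bz^T\bB_i\bz=\left(\frac{1}{\sigma_1^2}-\frac{1}{\sigma_0^2}\right)\tilde\bz_i^T\tilde\bz_i+\nu_i$, where $\nu_i$ is precisely the quantity defined in the statement.

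Given this identity, the claim collapses to a single concentration statement, because $\tilde\bz_i^T\tilde\bz_i=\bz^T\bSig_i\bz$ and hence $Y_i-\tilde Y_i=\left(\frac{1}{\sigma_1^2}-\frac{1}{\sigma_0^2}\right)\left(\bz^T\bSig_i\bz-p\sigma_i^2\right)$. I would then treat $\bz^T\bSig_i\bz$ as a Gaussian quadratic form: its mean is $\tr\bSig_i=p\sigma_i^2+\sigma_i^2\sum_{j=1}^{r_i}\lambda_{j,i}=p\sigma_i^2+O(1)$, and its variance equals $2\tr\bSig_i^2=O(p)$, the latter following from Assumption (iii) (bounded spectral norm), which forces all $p$ eigenvalues of $\bSig_i$ to be $O(1)$. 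Thus the deterministic offset obeys $\tr\bSig_i-p\sigma_i^2=O(1)$, while the random fluctuation $\bz^T\bSig_i\bz-\tr\bSig_i$ has standard deviation $O(\sqrt p)$.

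The final step invokes Assumption (v). Since $\sigma_0^2-\sigma_1^2=O(1/p)$ and $\sigma_i^2=O(1)$, the prefactor is $\frac{1}{\sigma_1^2}-\frac{1}{\sigma_0^2}=\frac{\sigma_0^2-\sigma_1^2}{\sigma_0^2\sigma_1^2}=O(1/p)$. Multiplying, the deterministic contribution to $Y_i-\tilde Y_i$ is $O(1/p)\cdot O(1)=O(1/p)$, and the fluctuation has variance $O(1/p^2)\cdot O(p)=O(1/p)$, so by Chebyshev's inequality $Y_i-\tilde Y_i\to 0$ in probability and therefore $Y_i-\tilde Y_i\dto 0$. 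The essential subtlety, which is also where Assumption (v) becomes indispensable, is a clash of scales: $\bz^T\bSig_i\bz$ carries an $O(p)$ mean and $O(\sqrt p)$ fluctuations, and only the $O(1/p)$ size of the variance gap $\frac{1}{\sigma_1^2}-\frac{1}{\sigma_0^2}$ simultaneously isolates the finite constant $p\sigma_i^2\left(\frac{1}{\sigma_1^2}-\frac{1}{\sigma_0^2}\right)$ that appears in $\tilde Y_i$ and suppresses the otherwise diverging random part; the proof is thus less about a hard estimate than about careful order-of-magnitude bookkeeping.
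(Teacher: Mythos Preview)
Your proof is correct and follows essentially the same route as the paper: both perform the identical algebraic split of $\bz^T\bB_i\bz$ into the isotropic piece $\left(\frac{1}{\sigma_1^2}-\frac{1}{\sigma_0^2}\right)\bz^T\bSig_i\bz$ and the low-rank piece $\nu_i$, then show the isotropic piece concentrates around $p\sigma_i^2\left(\frac{1}{\sigma_1^2}-\frac{1}{\sigma_0^2}\right)$. The only cosmetic difference is that the paper invokes the trace lemma and Slutsky's theorem where you use an explicit mean/variance computation and Chebyshev; your version is in fact more transparent about why Assumption~(v) is needed to kill the $O(\sqrt{p})$ fluctuation.
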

It entails from Proposition \ref{dist_conv} that the asymptotic behavior of $Y_i(\hat\bC_0,\hat\bC_1)$ corresponds to that of a linear combination of a chi-squared and normal distributions. 
 For illustration, we plot in \figref{dist_plot} the empirical distributions of $Y_0(\hat\bC_0,\hat\bC_1)$ and $Y_1(\hat\bC_0,\hat\bC_1)$ built based on several testing vectors drawn from $\mathcal{C}_0$ and $\mathcal{C}_1$. Unfortunately. the distribution of ${Y}_i$ does not have closed form expressions, which makes the analysis of the misclassification rate cumbersome. 
\begin{figure}[]
\centering
\includegraphics[scale=0.65]{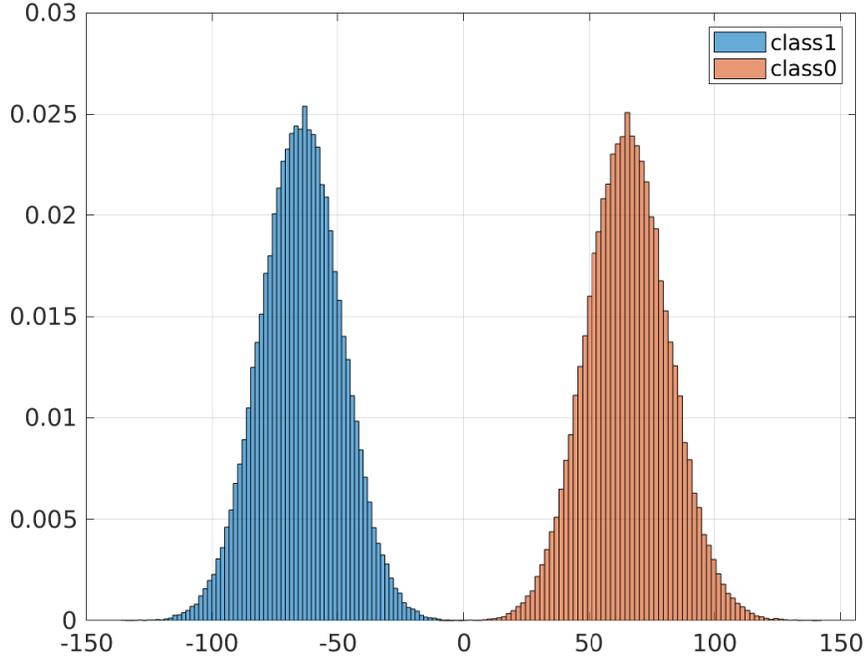}
\caption{Histograms of $ Y_0(\hat\bC_0,\hat\bC_1)$ and $ Y_1(\hat\bC_0,\hat\bC_1)$ with synthetic data with $\sigma_0^2=\sigma_1^2=1$, $r_0=r_1=3$, $ \lambda_{1,0}=\lambda_{1,1}=4$,  $ \lambda_{2,0}=\lambda_{2,1}=3$, $ \lambda_{3,0}=\lambda_{3,1}=2$ and $\bmu_0=-\bmu_1=\frac{4}{\sqrt{p}}\boldsymbol{1}_p$.}
\label{dist_plot}
\end{figure}
\subsection{Parameter optimization}

In this section, we present a possible setting of the parameter vector ${\bf w}$. Since the misclassification rate cannot be characterized in closed-form, we propose instead for tractability to maximize the Fisher ratio metric. Such a metric quantifies the separability between the two classes, by measuring the ratio of the separation between the means to the variance within classes and has been fundamental in the design of the Fisher discriminant analysis (FDA) based classifier.  
Under our setting, the square root of the Fisher-Ratio \cite{FISHER1936} associated with the classifier in \eqref{discriminant_function_QDA} is given by:
$$
\rho(\bw)=\frac{\left| m_0(\bw)-m_1(\bw) \right|}{\sqrt{v_0(\bw)+v_1(\bw)}},
$$
where for $i\in\left\{0,1\right\}$, $m_i(\bw)$ and $v_i(\bw)$ are respectively the mean and the variance of $\tilde Y_i$ with respect to $\bz$, given by:
\begin{align*}
m_i(\bw)&=p\sigma_i^2\left(\frac{1}{\sigma_1^2}-\frac{1}{\sigma_0^2}\right)+\mathbb E \nu_i-\xi_i,\\
v_i(\bw)&= \var(\nu_i)+4\by_i^T\by_i.
\end{align*}
where we have used the fact that $\nu_i$ and $\by_i^T\bz$ are uncorrelated. Following the same methodology in the design of FDA, we propose to select ${\bf w}$ that solves the following optimization problem:
$$
\bw^\star=\underset{\bw} {\rm arg max } \ \ \ \rho(\bw).
$$
 The optimization cannot be performed at this stage since $m_i({\bw})$ and $v_i(\bw)$ involve unknown quantities such as $\lambda_{j,i}$ and $\bv_{j,i}$ that appear in $\bSig_i$. To overcome this issue, we resort to techniques from random matrix theory which allows us to compute deterministic equivalents of $m_i(\bw)$ and $v_i(\bw)$. Using these deterministic equivalents, the unknown quantities $\lambda_{j,i}$ and $\bv_{j,i}$ can be consistently estimated by some observable quantities under the asymptotic regime defined in assumption \ref{asymptotic_regime_assump}. Before presenting the deterministic equivalents of $m_i(\bw)$ and $v_i(\bw)$, we shall first define the following quantities
 \begin{equation}
  \begin{aligned}
 & \alpha_i=\frac{\|\boldsymbol{\mu}\|^2}{\sigma_i^2}, \ i=0,1\\
& a_{j,i}=\frac{1-c_i/\lambda_{j,i}^2}{1+c_i/\lambda_{j,i}}, \ j=1,\cdots,r_i, \ i=0,1\\
&b_{j,i}=\frac{\bmu^T \bv_{j,i} \bv_{j,i}^T\bmu}{\|\bmu\|^2 }, \ j=1,\cdots,r_i, \ i=0,1\\
&\psi_{\ell,j,1,0}=\psi_{j,\ell,0,1}=\bv_{\ell,1}^T \bv_{j, 0}, \ \ell=1,\cdots,r_1, j=1,\cdots,r_0\\
& \phi_{j,0}=1+a_{j,0}\sum_{\ell=1}^{r_1}\lambda_{\ell,1}\psi_{\ell,j,1,0}^2,\  j=1,\cdots,r_0 \\
&\phi_{j,1}=1+a_{j,1}\sum_{\ell=1}^{r_0}\lambda_{\ell,0} \psi_{j,\ell,1,0}^2, \  j=1,\cdots,r_1
\end{aligned}
 \label{phi_ij}
 \end{equation}
Moreover, we shall assume that $\bmu^T\bu_{j,i}>0$ and $\bmu^T\bv_{j,i}>0$ for $i =0,1, \ j=1,\cdots,r_i $. This assumption, which is  needed to simplify the presentation of the results, can be made without loss of generality since eigenvectors are defined up to a sign.
\begin{theorem} Under the asymptotic regime defined in Assumption \ref{asymptotic_regime_assump}, we have
 \begin{equation}
m_i(\bw) - \overline m_i(\bw) \asto 0,
\label{mu_conv}
 \end{equation}
 \begin{equation}
v_i(\bw) - \overline v_i(\bw) \asto 0,
\label{v_conv}
 \end{equation}
with
\begin{align}
 \overline m_i(\bw) &=2\eta+c_1-c_0 +p\left(\frac{\sigma_i^2}{\sigma_1^2}-\frac{\sigma_i^2}{\sigma_0^2}\right) \label{mi_bar} \\ &+(-1)^i\alpha_{\tilde i}+\bg_i^T\bw, \nonumber  \\
\overline v_i(\bw) &= 4 \left(\bw^T\bE_i\bw+2\be_i^T\bw+b_i \right),\label{vi_bar}
\end{align}
where $\tilde i=1-i$ and 
{
\begin{align*}
\bg_0\!&=\!\left[\!\left\{\alpha_1a_{j,1}b_{j,1}+\frac{\sigma_0^2}{\sigma_1^2} \phi_{j,1}\right\}_{j=1}^{r_1},\left\{  -1-\lambda_{j,0}a_{j,0} \right\}_{j=1}^{r_0} \right]^T\!, \\
\bg_1\!&=\!\left[\!\left\{1+\lambda_{j,1}a_{j,1} \right\}_{j=1}^{r_1},-\left\{\alpha_0a_{j,0}b_{j,0}+\frac{\sigma_1^2}{\sigma_0^2} \phi_{j,0}\right\}_{j=1}^{r_0}\right]^T\!, 
\end{align*}
\begin{align*}
b_0&=\alpha_1\frac{\sigma_0^2}{\sigma_1^2}\left[1+\sum_{j=1}^{r_0}\lambda_{j,0}b_{j,0}\right]+c_1\frac{\sigma_0^4}{\sigma_1^4}+c_0,\\
b_1&=\alpha_0\frac{\sigma_1^2}{\sigma_0^2}\left[1+\sum_{j=1}^{r_1}\lambda_{j,1}b_{j,1}\right]+c_0\frac{\sigma_1^4}{\sigma_0^4}+c_1,
\end{align*}
\begin{align*}
\be_0\!&=\!\frac{\alpha_1\sigma_0^2}{\sigma_1^2}\!\left[\!\left\{\!a_{j,1}b_{j,1}\!+\!\sum_{\ell=1}^{r_0}\lambda_{\ell,0}a_{j,1}\sqrt{b_{j,1}b_{\ell,0}} \psi_{j,\ell,1,0}\!\right\}_{j\!=\!1}^{r_1}\!,\!\boldsymbol{0}_{r_0}\!\right]^T\\
\be_1\!&=\!\frac{\alpha_0\sigma_1^2}{\sigma_0^2}\!\left[\!\boldsymbol{0}_{r_1}, \!\left\{\!a_{j,0}b_{j,0}\!+\!\sum_{\ell=1}^{r_1}\lambda_{\ell,1}a_{j,0}\sqrt{b_{j,0}b_{\ell,1}} \psi_{j,\ell,0,1}\!\right\}_{\!j=\!1}^{r_0}\!\right]^T
\end{align*}
\begin{align*}
\bE_0&=\begin{bmatrix}
\tilde \bD_0+\bM_0&   \bN_0 \\
 \bN_0^T&  \bD_0
\end{bmatrix}, \   \    \
\bE_1=\begin{bmatrix}
 \bD_1 & \bN_1\\
 \bN_1^T&\tilde \bD_1+\bM_1
\end{bmatrix}, 
\end{align*}
with $\bD_i \in \mathbb{R}^{r_i\times r_i}$, $\tilde \bD_i,\bM_i\in \mathbb{R}^{r_{\tilde i}\times r_{\tilde i}}$ and $\bN_i\in  \mathbb{R}^{r_1\times r_0}$ defined as,
\begin{align*}
\bD_i&=\frac{1}{2}\diag\left\{  (1+\lambda_{j,i}a_{j,i})^2 \right\}_{j=1}^{r_i},\\
\tilde\bD_i&=\diag\left\{\ \frac{\sigma_i^4}{\sigma_{\tilde i}^4} \frac{\phi_{j,\tilde i}^2}{2}+\frac{\sigma_i^2}{\sigma_{\tilde i}^2}\alpha_{\tilde i}a_{j,\tilde i}b_{j,\tilde i} \right\}_{j=1}^{r_i},\\
 [N_i]_{\ell,j}&= -\frac{1}{2}\frac{\sigma_i^2}{\sigma_{\tilde i}^2}{(1+\lambda_{\ell,i})^2}a_{\ell,i}a_{j,\tilde i}\psi_{\ell,j,i,\tilde i}^2,\\
  [M_i]_{j,k}&=\alpha_{\tilde i}\frac{\sigma_i^2}{\sigma_{\tilde i}^2}a_{j,\tilde i}a_{k,\tilde i}\sqrt{b_{j,\tilde i}b_{k,\tilde i}}\sum_{\ell=1}^{r_i}\lambda_{\ell,i}\psi_{\ell,j,i,\tilde i}\psi_{\ell,k,i,\tilde i}.
\end{align*}
}
\label{m_v_conv}
\end{theorem}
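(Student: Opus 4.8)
The plan is to write both $m_i(\bw)$ and $v_i(\bw)$ as finite linear combinations of scalar random quantities of the form $\bu_{j,i}^T\bSig_k\bu_{\ell,m}$, $\bu_{j,i}^T\bmu$ and $\bu_{j,i}^T(\overline\bx_k-\bmu_k)$, and then to replace each such scalar by its almost-sure limit drawn from spiked random matrix theory. There are two independent sources of randomness: the test vector $\bz$, over which the moments $m_i$ and $v_i$ are already taken, and the training data, which enters through the sample eigenvectors $\bu_{j,i}$, the sample eigenvalues $s_{j,i}$, and the sample means $\overline\bx_i$; it is almost-sure convergence over the latter that must be shown. At the outset I would use the classical fact that, for Gaussian samples, $\overline\bx_i$ is independent of the centered sample covariance $\hat\bSig_i$ and hence of $\{\bu_{j,i}\}$. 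This decouples the sampling noise $\overline\bx_i-\bmu_i=\bSig_i^{1/2}\bar\bz_i$ from the eigenbasis and makes each projection $\bu_{j,i}^T(\overline\bx_i-\bmu_i)$ conditionally Gaussian with variance $O(1/n_i)$, so that it vanishes almost surely and only its identity-weighted trace survives.

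The core of the argument is the spiked-model toolbox of \cite{baik2005,Couillet2011}, which I would invoke to establish, for each class $i$ and spike index $j\le r_i$: the alignment $(\bu_{j,i}^T\bv_{j,i})^2\asto a_{j,i}$ together with the asymptotic orthogonality $\bu_{j,i}^T\bv_{\ell,i}\asto 0$ for $\ell\neq j$; the cross-class overlap $\bu_{j,i}^T\bv_{\ell,\tilde i}\asto\sqrt{a_{j,i}}\,\psi_{\ell,j,i,\tilde i}$; and the projection onto the mean difference $\bu_{j,i}^T\bmu\asto\sqrt{a_{j,i}b_{j,i}}\,\|\bmu\|$. The conventions $\bmu^T\bu_{j,i}>0$ and $\bmu^T\bv_{j,i}>0$ fix the otherwise ambiguous signs of these square roots, which is exactly why they were imposed without loss of generality. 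Writing $\bSig_i=\sigma_i^2\bI_p+\sigma_i^2\sum_m\lambda_{m,i}\bv_{m,i}\bv_{m,i}^T$ and substituting these alignments yields the building blocks $\bu_{j,i}^T\bSig_i\bu_{j,i}\asto\sigma_i^2(1+\lambda_{j,i}a_{j,i})$ and the cross-class analogue $\bu_{j,\tilde i}^T\bSig_i\bu_{j,\tilde i}\asto\sigma_i^2\phi_{j,\tilde i}$, which are precisely the quantities appearing in $\bg_i$, $\bD_i$, $\bE_i$ and through $\phi_{j,i}$.

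With these limits in hand the two convergences are assembled separately. For $m_i(\bw)$ I would expand $\mathbb{E}\nu_i=\frac{1}{\sigma_1^2}\sum_j w_{j,1}\,\bu_{j,1}^T\bSig_i\bu_{j,1}-\frac{1}{\sigma_0^2}\sum_j w_{j,0}\,\bu_{j,0}^T\bSig_i\bu_{j,0}$ and $\xi_i$, in which $\hat\bC_k^{-1}=\frac{1}{\sigma_k^2}(\bI_p+\sum_j w_{j,k}\bu_{j,k}\bu_{j,k}^T)$ splits into an identity part and a finite-rank part; the identity part produces the trace terms $c_1-c_0$ together with the $\alpha$-terms, while the finite-rank part produces $\bg_i^T\bw$. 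For $v_i(\bw)$ I would use that $\nu_i=\bz^T\bQ_i\bz$ is a quadratic form in the Gaussian $\bz$, with the symmetric matrix $\bQ_i=\bSig_i^{1/2}\bigl(\frac{1}{\sigma_1^2}\sum_j w_{j,1}\bu_{j,1}\bu_{j,1}^T-\frac{1}{\sigma_0^2}\sum_j w_{j,0}\bu_{j,0}\bu_{j,0}^T\bigr)\bSig_i^{1/2}$, so that $\var(\nu_i)=2\tr(\bQ_i^2)$ collapses to a finite sum of products of the scalars $\bu_\cdot^T\bSig_i\bu_\cdot$ and hence to the quadratic form $\bw^T\bE_i\bw$. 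The uncorrelatedness of the even form $\nu_i$ and the odd form $\by_i^T\bz$ for centered Gaussians, already noted in the statement, supplies the additive $4\by_i^T\by_i$, whose expansion into the identity and finite-rank parts of $\hat\bC_k^{-1}$ furnishes the constant $b_i$, the linear part $\be_i^T\bw$, and the remaining quadratic contributions to $\bE_i$.

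The main obstacle will be the cross-class projections and, above all, the products of such projections. The within-class alignment $a_{j,i}$ follows directly, but the quantities $\bu_{j,i}^T\bv_{\ell,\tilde i}$, and the double overlaps entering $\tr(\bQ_i^2)$ and $\by_i^T\by_i$, couple the sample eigenbasis of one class with the population directions of the other. Controlling them requires the decomposition $\bu_{j,i}=\sqrt{a_{j,i}}\,\bv_{j,i}+\sqrt{1-a_{j,i}}\,\bu_{j,i}^{\perp}$ together with delocalization of the orthogonal component $\bu_{j,i}^{\perp}$, so that its projection onto any fixed bounded-norm direction such as $\bv_{\ell,\tilde i}$ or $\bmu$ is $O(p^{-1/2})$; the independence of the two classes' training samples is essential here. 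A secondary difficulty is to promote the convergence of each individual scalar to that of the full $\bw$-dependent combinations: since $\bw$ is fixed and finite-dimensional and Assumption~\ref{asymptotic_regime_assump}~(ii)--(iv) keeps $r_i$, all spectral norms and $\|\bmu\|$ bounded, only finitely many bounded inner products occur, so the almost-sure limits combine termwise with no uniformity issue over a growing index set, which delivers \eqref{mu_conv} and \eqref{v_conv}.
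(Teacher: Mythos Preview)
Your proposal is correct and follows essentially the same route as the paper's proof: the same spiked-model alignment limits from \cite{baik2005,Couillet2011}, the same independence of $\overline\bx_i$ and $\hat\bSig_i$ to kill the mixed terms and leave the trace contributions $c_i$, the same Gaussian quadratic-form identity $\var(\bz^T\bQ\bz)=2\tr\bQ^2$ for $\var(\nu_i)$, and the same splitting of $\hat\bC_k^{-1}$ into its identity and finite-rank parts. Your explicit eigenvector decomposition $\bu_{j,i}=\sqrt{a_{j,i}}\,\bv_{j,i}+\sqrt{1-a_{j,i}}\,\bu_{j,i}^{\perp}$ with delocalization of $\bu_{j,i}^{\perp}$ is simply a more transparent statement of what the paper imports wholesale from \cite{Couillet2011}; it is not a different argument.
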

\begin{remark}
Using item (v) of Assumption 1, the expressions in Theorem \ref{m_v_conv} can be further simplified by leveraging the fact that $\frac{\sigma_1^2}{\sigma_0^2}\to 1$. However, when handling real data sets, we observed that working with the non-simplified expressions may lead to better performances, due to a possible inaccuracy of item (v) in Assumption 1. This is the reason why in our simulations we worked with the expressions of Theorem \ref{m_v_conv}, which can be further simplified by substituting  $\frac{\sigma_1}{\sigma_0}$ and $\frac{\sigma_0}{\sigma_1}$ by 1. In doing so, we obtain the following simplified expressions which we provide below for the sake of completeness:
%more simplified expressions of the parameters could be derived. However, we noticed that when it comes to handling real data sets, the use of the non-simplified expression in Theorem 4 allows getting better performance, because they may handle better the practical cases in which Assumption 1 is not exactly valid. For integrity, we give hereafter simplified expressions.

{
\begin{align*}
\bg_0\!&=\!\left[\!\left\{\alpha_1a_{j,1}b_{j,1}+\phi_{j,1}\right\}_{j=1}^{r_1},\left\{  -1-\lambda_{j,0}a_{j,0} \right\}_{j=1}^{r_0} \right]^T\!, \\
\bg_1\!&=\!\left[\!\left\{1+\lambda_{j,1}a_{j,1} \right\}_{j=1}^{r_1},-\left\{\alpha_0a_{j,0}b_{j,0}+\phi_{j,0}\right\}_{j=1}^{r_0}\right]^T\!, 
\end{align*}
\begin{align*}
b_0&=\alpha_1\left[1+\sum_{j=1}^{r_0}\lambda_{j,0}b_{j,0}\right]+c_1+c_0,\\
b_1&=\alpha_0\left[1+\sum_{j=1}^{r_1}\lambda_{j,1}b_{j,1}\right]+c_0+c_1,
\end{align*}
\begin{align*}
\be_0\!&=\!{\alpha_1}\!\left[\!\left\{\!a_{j,1}b_{j,1}\!+\!\sum_{\ell=1}^{r_0}\lambda_{\ell,0}a_{j,1}\sqrt{b_{j,1}b_{\ell,0}} \psi_{j,\ell,1,0}\!\right\}_{j\!=\!1}^{r_1}\!,\!\boldsymbol{0}_{r_0}\!\right]^T\\
\be_1\!&=\!{\alpha_0}\!\left[\!\boldsymbol{0}_{r_1}, \!\left\{\!a_{j,0}b_{j,0}\!+\!\sum_{\ell=1}^{r_1}\lambda_{\ell,1}a_{j,0}\sqrt{b_{j,0}b_{\ell,1}} \psi_{j,\ell,0,1}\!\right\}_{\!j=\!1}^{r_0}\!\right]^T
\end{align*}
\begin{align*}
\bE_0&=\begin{bmatrix}
\tilde \bD_0+\bM_0&   \bN_0 \\
 \bN_0^T&  \bD_0
\end{bmatrix}, \   \    \
\bE_1=\begin{bmatrix}
 \bD_1 & \bN_1\\
 \bN_1^T&\tilde \bD_1+\bM_1
\end{bmatrix}, 
\end{align*}
with $\bD_i \in \mathbb{R}^{r_i\times r_i}$, $\tilde \bD_i,\bM_i\in \mathbb{R}^{r_{\tilde i}\times r_{\tilde i}}$ and $\bN_i\in  \mathbb{R}^{r_1\times r_0}$ defined as,
\begin{align*}
\bD_i&=\frac{1}{2}\diag\left\{  (1+\lambda_{j,i}a_{j,i})^2 \right\}_{j=1}^{r_i},\\
\tilde\bD_i&=\diag\left\{ \frac{\phi_{j,\tilde i}^2}{2}+\alpha_{\tilde i}a_{j,\tilde i}b_{j,\tilde i} \right\}_{j=1}^{r_i},\\
 [N_i]_{\ell,j}&= -\frac{1}{2}{(1+\lambda_{\ell,i})^2}a_{\ell,i}a_{j,\tilde i}\psi_{\ell,j,i,\tilde i}^2,\\
  [M_i]_{j,k}&=\alpha_{\tilde i}a_{j,\tilde i}a_{k,\tilde i}\sqrt{b_{j,\tilde i}b_{k,\tilde i}}\sum_{\ell=1}^{r_i}\lambda_{\ell,i}\psi_{\ell,j,i,\tilde i}\psi_{\ell,k,i,\tilde i}.
\end{align*}
}

\end{remark}
Using these deterministic equivalents, a deterministic equivalent of the Fisher ratio $\rho(\bw)$ can be obtained as,
$$
\rho(\bw)-\overline \rho(\bw)\asto 0,
$$
where
$$
\overline \rho(\bw)=\frac{\left|\overline m_0(\bw)- \overline m_1(\bw)\right|}{\sqrt{ \overline v_0(\bw)+ \overline v_1(\bw)}},
$$
Replacing $\overline m_i(\bw)$ and $\overline v_i(\bw)$ by their expressions, our optimization problem can be written as:
\begin{equation}
 \max_\bw \ \  \frac{ \left| \bg^T\bw+\beta_0+\beta_1\right|}{2\sqrt{\bw^T\bE\bw+2\be^T\bw+ b}},
 \label{opt-prob}
\end{equation}
where $\beta_i=\alpha_i+p\left(\frac{\sigma_i^2}{\sigma_{\tilde i}^2}-1\right)$, $\bg=\bg_0-\bg_1$, $\be=\be_0+\be_1$, $\bE=\bE_0+\bE_1$ and $ b=b_0+b_1$.
To simplify the optimization, we perform the change of variable $\tilde \bw= \bE^{\frac{1}{2}}\bw + \bE^{-\frac{1}{2}}\be $.
\begin{proposition} Assume that $\beta_0+\beta_1-\bg^T\bE^{-1}\be\neq 0$. The optimal parameter vector $\bw^\star$ is given by
\begin{equation}
\bw^\star=\bE^{-1}(\theta^\star \bg-\be)
\label{opt_par}
\end{equation}
where $\theta^\star=\frac{b-\be^T\bE^{-1}\be}{|\beta_0+\beta_1-\bg^T\bE^{-1}\be|}$.
\label{optimal_w}
\end{proposition}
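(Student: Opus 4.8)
The plan is to turn the scale-invariant-looking ratio in \eqref{opt-prob} into a genuine Rayleigh quotient through the change of variable announced just before the statement, and then to read off the optimizer from a single Cauchy--Schwarz inequality. I would first record the two facts that legitimize the substitution. Since $\bw^{T}\bE\bw+2\be^{T}\bw+b=\tfrac14\bigl(\overline v_0(\bw)+\overline v_1(\bw)\bigr)$ is, by \eqref{vi_bar}, one quarter of a sum of (deterministic equivalents of) variances, it is nonnegative for every $\bw$; hence $\bE=\bE_0+\bE_1$ is positive semidefinite, and I shall take it to be positive definite so that $\bE^{1/2}$ and $\bE^{-1}$ exist. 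Then $\tilde\bw=\bE^{1/2}\bw+\bE^{-1/2}\be$ is a bijection with inverse $\bw=\bE^{-1/2}\tilde\bw-\bE^{-1}\be$. Completing the square yields $\bw^{T}\bE\bw+2\be^{T}\bw+b=\|\tilde\bw\|^{2}+k$ with $k:=b-\be^{T}\bE^{-1}\be$, while the numerator becomes $\bg^{T}\bw+\beta_0+\beta_1=(\bE^{-1/2}\bg)^{T}\tilde\bw+d$ with $d:=\beta_0+\beta_1-\bg^{T}\bE^{-1}\be$, the scalar assumed nonzero in the hypothesis. Setting $\ba:=\bE^{-1/2}\bg$, problem \eqref{opt-prob} becomes the maximization of $|\ba^{T}\tilde\bw+d|\big/\bigl(2\sqrt{\|\tilde\bw\|^{2}+k}\bigr)$ over $\tilde\bw$.

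The heart of the proof is this reduced problem, which I would dispatch by Cauchy--Schwarz. Squaring and applying the inequality to the $(\dim\bw+1)$-dimensional vectors obtained by appending $\sqrt{k}$ to $\tilde\bw$ and $d/\sqrt{k}$ to $\ba$ (this uses $k>0$) gives
\[
\frac{(\ba^{T}\tilde\bw+d)^{2}}{\|\tilde\bw\|^{2}+k}\;\le\;\|\ba\|^{2}+\frac{d^{2}}{k}\;=\;\bg^{T}\bE^{-1}\bg+\frac{d^{2}}{k},
\]
with equality exactly when the appended vector built from $\tilde\bw$ is proportional to that built from $\ba$. Matching the appended coordinates, $\sqrt{k}=c\,d/\sqrt{k}$, forces the proportionality constant to be $c=k/d$, so the unique maximizer is $\tilde\bw^{\star}=(k/d)\,\ba=(k/d)\,\bE^{-1/2}\bg$. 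Undoing the substitution, $\bw^{\star}=\bE^{-1/2}\tilde\bw^{\star}-\bE^{-1}\be=\bE^{-1}\bigl((k/d)\bg-\be\bigr)$, which is precisely the announced form \eqref{opt_par} with $\theta^{\star}=k/d=(b-\be^{T}\bE^{-1}\be)\big/(\beta_0+\beta_1-\bg^{T}\bE^{-1}\be)$.

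The obstacles are not computational but lie in two positivity/sign checks on which the whole argument rests. First, I would have to justify that $\bE$ is strictly positive definite and that $k>0$, rather than merely nonnegative; both are what ensure that the change of variable is well defined and, crucially, that the Cauchy--Schwarz bound is attained at a finite $\tilde\bw^{\star}$ instead of only approached as $\|\tilde\bw\|\to\infty$. I would obtain these from the nondegeneracy of the quadratic form $\overline v_0+\overline v_1$, which is four times a sum of genuine variances that cannot vanish identically under Assumption~\ref{asymptotic_regime_assump}. Second, and more delicate, is the bookkeeping of the sign of $d$: the attained value above depends on $d$ only through $d^{2}$, hence only through $|\beta_0+\beta_1-\bg^{T}\bE^{-1}\be|$ (which is why an absolute value naturally surfaces once one records the optimal Fisher ratio), yet the \emph{maximizer} is fixed by the signed constant $k/d$ through the equality case of Cauchy--Schwarz. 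Keeping the optimal-value and the argument-of-the-maximum sign conventions mutually consistent is the step I would watch most carefully.
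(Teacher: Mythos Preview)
Your route differs from the paper's and is more economical. The paper splits $\max_{\tilde\bw}|f(\tilde\bw)|$ into the two signed problems $\max f$ and $\max(-f)$, parametrizes $\tilde\bw=\theta\bar\bw$ with $\theta\geq 0$ and $\|\bar\bw\|=1$, reads off the optimal direction $\bar\bw^\star=\pm\bE^{-1/2}\bg/\|\bE^{-1/2}\bg\|$ by inspection, and then carries out a one-variable calculus optimization in $\theta$, treating $d>0$ and $d<0$ separately before comparing the two optimal values. Your single Cauchy--Schwarz on the augmented vectors $(\tilde\bw,\sqrt{k})$ and $(\ba,d/\sqrt{k})$ delivers the bound and its equality case in one stroke, with no case split and no calculus. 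The positivity hypotheses you flag ($\bE\succ0$ and $k>0$) are tacitly used in the paper's argument as well (for the change of variable and for the finiteness of the one-variable optimizer), so you are not adding assumptions.

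Your closing paragraph is the important one. Your equality case pins down $\theta^\star=k/d$ with the \emph{signed} $d$, whereas the proposition records $k/|d|$. Your value is the correct one. If you run the paper's own argument when $d<0$, the winning subproblem is $\max(-f)$, whose optimal direction is $-\bE^{-1/2}\bg/\|\bE^{-1/2}\bg\|$, giving $\tilde\bw^\star=-\tfrac{k}{|d|}\bE^{-1/2}\bg=\tfrac{k}{d}\bE^{-1/2}\bg$; the minus sign is dropped when the paper merges the two cases into a single formula. The optimal \emph{value} of the ratio depends only on $d^2$, hence on $|d|$, which is the likely source of the slip; the \emph{argmax} depends on the signed $d$. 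So you may state $\theta^\star=(b-\be^{T}\bE^{-1}\be)/(\beta_0+\beta_1-\bg^{T}\bE^{-1}\be)$ without the hedging in your last sentence.
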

\begin{remark}
We assumed in Proposition \ref{optimal_w} that $\beta_0+\beta_1-\bg^T\bE^{-1}\be\neq 0$. Although we did not prove that, it is found to be true in all our extensive simulations on both real and synthetic data.

Until now, we assumed that the constant $\eta$ that appears in the score function of the proposed classifier is known. It should be noted that the optimization of the Fisher ratio is not impacted by this assumption since it does not depend on $\eta$. A possible choice of $\eta$ is the one that ensures equal distance between both means, i.e. $\overline m_0(\bw^\star)+\overline m_1(\bw^\star)=0$.
 The $\eta$ that verifies this equation is:
\begin{equation}
\begin{aligned}
\eta&=-\frac{1}{4}\left[(\bg_0+\bg_1)^T\bw^\star+\alpha_1-\alpha_0+2(c_1-c_0)+p\frac{\sigma_0^4-\sigma_1^4}{\sigma_0^2\sigma_1^2}\right],
\end{aligned}
\label{eta_opt}
\end{equation}
\end{remark}
The optimal design parameters $\bw^\star$ in proposition \ref{optimal_w} could not be directly used in practice, since they depend on the unobservable quantities $\alpha_i$, $\lambda_{j,i}$ and $b_{j,i}$. To solve this issue, consistent estimators for these quantities need to be retrieved. This is the objective of the following result: 
\begin{proposition}Under the settings of Assumption 1, we have
\begin{align*}
&| \lambda_{j,i}-\hat\lambda_{j,i}| \asto 0,\  \ \ \  | \alpha_i-\hat \alpha_i|\asto 0,\\
 & | b_{j,i}-\hat b_{j,i}|\asto 0,  \  \ \ \ \  |  \psi_{\ell,j,1,0} -\hat \psi_{\ell,j,1,0}|\asto 0,
\end{align*}
where
\begin{align*}
&\hat \alpha_i=\frac{\|\hat\bmu\|^2- c_1\sigma_1^2-c_0\sigma_0^2}{\sigma_i^2},\\
&\hat\lambda_{j,i}=\frac{{s_{j,i}/\sigma_i^2}+1-c_i+\sqrt{(s_{j,i}/\sigma_i^2+1-c_i)^2-4s_{j,i}/\sigma_i^2}}{2},\\
&\hat b_{j,i}=\frac{1+c_i/\hat\lambda_{j,i}}{1-c_i/\hat\lambda_{j,i}^2}\frac{\hat\bmu^T \bv_{j,i} \bv_{j,i}^T\hat\bmu}{\|\hat\bmu\|^2- c_1\sigma_1^2-c_0\sigma_0^2}, \\
&\hat \psi_{\ell,j,1,0}= \frac{1}{\sqrt{a_{\ell,1}a_{j,0}}} \bu_{\ell,1}^T\bu_{j,0},
\end{align*}
with $\hat\bmu=\overline {\bf x}_0- \overline {\bf x}_1$ and $s_{j,i}$ is the $j$-th largest eigenvalue of the sample covariance matrix $\hat\bSig_i$ corresponding to class $i$.
\end{proposition}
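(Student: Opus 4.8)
The plan is to prove the four consistency claims one at a time, reducing each to a small number of now-classical facts about the spiked model and then invoking the continuous mapping theorem. Two facts are used throughout: (a) the almost-sure limit of an isolated sample spike eigenvalue lying above the phase-transition threshold, and (b) the almost-sure limit of the squared overlap between the corresponding sample and population eigenvectors. Item (ii) of Assumption \ref{asymptotic_regime_assump}, namely $\lambda_{j,i}>\sqrt{c_i}$, places every spike strictly above threshold, so both limits are available and the inverse maps that define the estimators are single-valued \cite{baik2005,Couillet2011}.

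For $\hat\alpha_i$ I would start from a bias--variance decomposition of $\|\hat\bmu\|^2$. Writing $\overline\bx_i=\bmu_i+\overline\bz_i$ with $\overline\bz_i=\frac{1}{n_i}\sum_{\ell\in\mathcal{T}_i}\bSig_i^{\frac12}\bz_\ell$ a centered Gaussian vector of covariance $\frac{1}{n_i}\bSig_i$, we have $\hat\bmu=(\bmu_0-\bmu_1)+(\overline\bz_0-\overline\bz_1)$. Expanding the square and using $\frac{1}{n_i}\tr\bSig_i\to c_i\sigma_i^2$ (a consequence of \eqref{eq:sig} together with items (iii) and (v) of Assumption \ref{asymptotic_regime_assump}), while the linear terms $\bmu^T\overline\bz_i$ and the cross term $\overline\bz_0^T\overline\bz_1$ vanish almost surely because they are independent and of variance $O(1/n_i)$, gives $\|\hat\bmu\|^2\asto\|\bmu\|^2+c_0\sigma_0^2+c_1\sigma_1^2$; the stated form of $\hat\alpha_i$ is then immediate. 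For $\hat\lambda_{j,i}$, fact (a) gives $s_{j,i}/\sigma_i^2\asto g(\lambda_{j,i})$ for an explicit strictly increasing $g$; the estimator is exactly $g^{-1}(s_{j,i}/\sigma_i^2)$ written out, the $+\sqrt{\cdot}$ branch selecting the correct (larger) root by monotonicity, and continuity of $g^{-1}$ delivers consistency.

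The two eigenvector-dependent quantities are handled together. The key input is fact (b), $|\bu_{j,i}^T\bv_{j,i}|^2\asto a_{j,i}$, in its refined directional form $\bu_{j,i}^T\bmu\asto\pm\sqrt{a_{j,i}}\,\bv_{j,i}^T\bmu$. A short noise-cleaning step is also needed: since for Gaussian data the sample mean is independent of the centered sample covariance, $\bu_{j,i}$ is independent of $\overline\bz_i$, so both $\bu_{j,i}^T\overline\bz_i$ and $\bu_{j,i}^T\overline\bz_{\tilde i}$ vanish almost surely and hence $(\bu_{j,i}^T\hat\bmu)^2\asto(\bu_{j,i}^T\bmu)^2\asto a_{j,i}(\bv_{j,i}^T\bmu)^2$. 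Dividing by the prefactor $\frac{1+c_i/\hat\lambda_{j,i}}{1-c_i/\hat\lambda_{j,i}^2}$, which is $1/a_{j,i}$ evaluated at the consistent $\hat\lambda_{j,i}$, and by $\|\hat\bmu\|^2-c_0\sigma_0^2-c_1\sigma_1^2\asto\|\bmu\|^2$, recovers $b_{j,i}$ (with the observable sample eigenvector $\bu_{j,i}$ in the numerator). For $\hat\psi_{\ell,j,1,0}$ the sample eigenvectors $\bu_{\ell,1}$ and $\bu_{j,0}$ come from the two independent classes; decomposing each as an aligned part $\sqrt{a}\,\bv$ plus a delocalized orthogonal part and using independence to kill every cross term yields $\bu_{\ell,1}^T\bu_{j,0}\asto\sqrt{a_{\ell,1}a_{j,0}}\,\bv_{\ell,1}^T\bv_{j,0}$, so the prefactor $1/\sqrt{a_{\ell,1}a_{j,0}}$ de-biases it.

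\textbf{Main obstacle.} The delicate part is the eigenvector analysis: establishing the refined projection $\bu_{j,i}^T\bmu\asto\pm\sqrt{a_{j,i}}\,\bv_{j,i}^T\bmu$ and, for $\hat\psi$, controlling the delocalized components so that their inner products with fixed vectors and with the independent other-class eigenvector all vanish. The eigenvalue limit for $\lambda_{j,i}$ is essentially a citation, whereas these projection statements require the full strength of the spiked-model eigenvector characterization, most cleanly obtained through a resolvent / deterministic-equivalent argument. The potential within-class dependence between $\bu_{j,i}$ and the sample mean, which would otherwise obstruct the analysis of $\hat b_{j,i}$, is removed at no cost by the Gaussian independence of the sample mean and the centered sample covariance.
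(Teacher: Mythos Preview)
Your proposal is correct and follows precisely the route the paper indicates: the paper's own proof reads in full ``The proof is a direct application of results from \cite{Couillet2011,baik2005} and it is thus omitted,'' and your sketch simply unpacks those spiked-model eigenvalue/eigenvector limits together with the Gaussian independence of sample mean and sample covariance. Your parenthetical that the numerator of $\hat b_{j,i}$ should carry the observable sample eigenvector $\bu_{j,i}$ rather than the population $\bv_{j,i}$ is also on point---that is a typo in the stated estimator.
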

\begin{proof}
The proof is a direct application of results from \cite{Couillet2011,baik2005} and it is thus omitted. 
\end{proof}

The steps of the design of the proposed classifier are summarized in the following algorithm.

\begin{algorithm}[H]
\label{alg_2}
\caption{Steps for the computation of the proposed classifier decision rule}
\begin{algorithmic}
\State 1. Given the training set corresponding to class $i$, use one of the algorithms of \cite{KRITCHMAN2008,Johnstone2009,Ulfarsson2008,Passemier2017} to estimate $\sigma_i^2$ and $r_i$.
\State 2. Compute $\left\{s_{j,i}\right\}_{j=1}^{r_i}$ the $r_i$ largest eigenvalues of the sample covariance matrix of class $i$ and their corresponding eigenvectors $u_{j,i}$.
\State 3. Compute the parameters of the proposed classifier defined in Theorem \ref{m_v_conv}.
\State 4. Compute $\eta$ using equation \eqref{eta_opt} and the optimal parameter vector $\bw^\star$ using equation \eqref{opt_par}.
\State 5. Plugging $\eta$ and ${\bf w}^\star$ into \eqref{discriminant_function_impQDA} yields the decision rule of the proposed classifier.
\end{algorithmic}
\end{algorithm}

\section{Numerical Simulations}
\label{Numerical_Simulations}
In this section, we compare the performance of the proposed improved QDA classifier with R-QDA classifier using both synthetic and real data.
\subsection{Synthetic data}
For the synthetic data simulations, we used the following protocol for Montecarlo estimation of the true misclassification rate:
\begin{itemize}
\item Step 1: Set $r_0=r_1=3$, orthogonal symmetry breaking directions as follows: 
\begin{align*}
&\bV_0=[\bv_{1,0},\bv_{2,0},\bv_{3,0}]=[\bI_{3\times3},\boldsymbol{0}_{3,p-3}]^T\\
&\bV_1=[\bv_{1,1},\bv_{2,1},\bv_{3,1}]=[\boldsymbol{0}_{3\times3},\bI_{3\times3},\boldsymbol{0}_{3,p-6}]^T
\end{align*}
and their corresponding weights $\lambda_{1,0}=5$, $\lambda_{2,0}=4$, $\lambda_{3,0}=3$, $\lambda_{1,1}=6$, $\lambda_{2,1}=5$, $\lambda_{3,1}=4$. Set $\bmu_0=\frac{a}{\sqrt{p}}[1,1,\cdots,1]^T$ and $\bmu_1=-\bmu_0$ where $a$ is a finite constant. In our simulations, we choose $a=0.5$ and $a= 0.8$.
\item Step 2: Generate $n_i$ training samples for class $i$.
\item Step 3: Using the training set, design the improved QDA classifier as explained in section \ref{Improved_QDA}.
\item Step 4: Estimate the true misclassification rate of both classifiers using a set of 2000 testing samples. For the R-QDA classifier, a grid search over $\gamma \in \{ 10^{i/10}, i=-10:1:10\}$ is performed.
\item Step 5: Repeat Step 2--4, 250 times and determine the average misclassification rate of both classifiers.
\end{itemize}
In \figref{fig_1}, we plot the misclassification rate vs. training sample size $n$ when $p=500$, $\sigma_0^2=\sigma_1^2=1$ and $\pi_0=\pi_1=1/2$ for the proposed improved QDA and the classical R-QDA using synthetic data. It is observed that the improved QDA outperforms the classical R-QDA and the gap between the two schemes is significant.
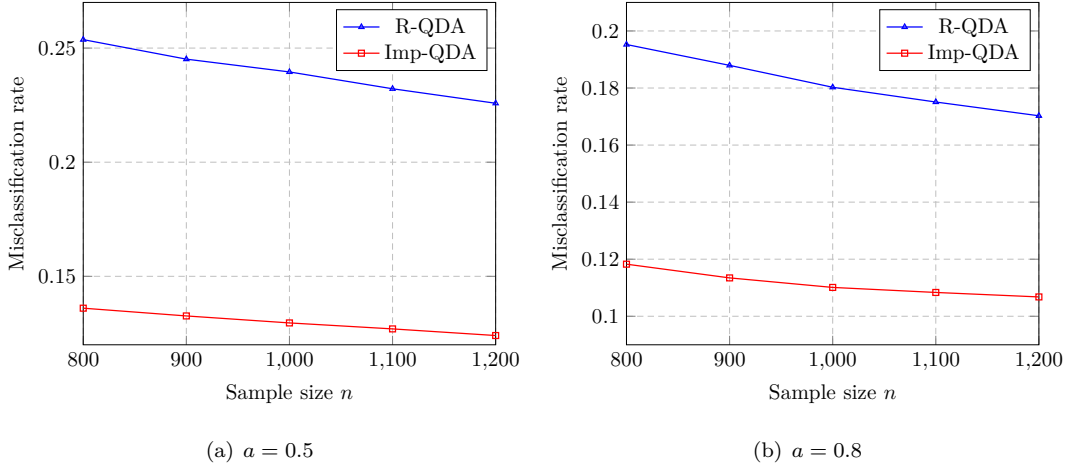
\begin{figure}
  \centering
  \subfigure[$a=0.5$]  
{  
   \begin{tikzpicture}[scale=0.8,font=\normalsize]
    \renewcommand{\axisdefaulttryminticks}{4}
    \pgfplotsset{every major grid/.style={densely dashed}}
    \tikzstyle{every axis y label}+=[yshift=-20pt]
    \tikzstyle{every axis x label}+=[yshift=5pt]
    legend style={fill=white,        at={(0.98,0.98)}, anchor=north east, font=\normalsize}
  
    \begin{axis}[
      xmin=800,
      xmax=1200,
      ymin=0.12,
      ymax=0.27,
      grid=major,
      scaled ticks=true,
   			xlabel={Sample size $n$},
   			ylabel={Misclassification rate}			
      ]
   \addplot[color=blue,mark size=1.4pt,mark=triangle,line width=0.6pt,error bars/.cd,y dir=both,y explicit, error bar style={mark size=1.5pt}]  plot coordinates{    
  (800,2.537040e-01)(900,2.451500e-01)(1000,2.395460e-01)(1100,2.321720e-01)(1200,2.258200e-01)};\addlegendentry{R-QDA}           

\addplot[color=red,mark size=1.4pt,mark =square,line width=0.6pt,error bars/.cd,y dir=both,y explicit, error bar style={mark size=1.5pt}]  plot coordinates{ 
                  (800,1.360360e-01)(900,1.326380e-01)(1000,1.295780e-01)(1100,1.269600e-01)(1200,1.240560e-01)
};\addlegendentry{ Imp-QDA }
           \end{axis}
  \end{tikzpicture}}
  \subfigure[$a=0.8$]  
{  \begin{tikzpicture}[scale=0.8,font=\normalsize]
    \renewcommand{\axisdefaulttryminticks}{4}
    \pgfplotsset{every major grid/.style={densely dashed}}
    \tikzstyle{every axis y label}+=[yshift=-20pt]
    \tikzstyle{every axis x label}+=[yshift=5pt]
    legend style={fill=white,        at={(0.98,0.98)}, anchor=north east, font=\normalsize}
    \begin{axis}[
      xmin=800,
      ymin=0.09,
      xmax=1200,
      ymax=0.21,
      grid=major,
      scaled ticks=true,
   			xlabel={Sample size $n$},
   			ylabel={Misclassification rate}			
      ]
 \addplot[color=blue,mark size=1.4pt,mark=triangle,line width=0.6pt,error bars/.cd,y dir=both,y explicit, error bar style={mark size=1.5pt}]  plot coordinates{  (800,1.952500e-01)(900,1.879360e-01)(1000,1.802640e-01)(1100,1.750900e-01)(1200,1.702720e-01)
 		};
\addlegendentry{R-QDA}                  
  \addplot[color=red,mark size=1.4pt,mark =square,line width=0.6pt,error bars/.cd,y dir=both,y explicit, error bar style={mark size=1.5pt}]  plot coordinates{ 
(800,1.182540e-01)(900,1.134460e-01)(1000,1.101060e-01)(1100,1.083420e-01)(1200,1.067740e-01)	};
\addlegendentry{ Imp-QDA }
           \end{axis}
  \end{tikzpicture} }

\centering
  \caption{Misclassification rate vs. sample size $n$ for $p=500$, $\sigma_0^2=\sigma_1^2=1$ and $\pi_0=\pi_1=1/2$. Comparison between Improved QDA and R-QDA with synthetic data. } 
  \label{fig_1}
\end{figure}
 \begin{table}[H]
   \caption{Misclassification error for $n=1000$, $p=500$, $a=0.5$, $\sigma_0^2=1$ and different values of $\sigma_1^2$.}
 \begin{center}
    \begin{tabular}{ |c|c|c|c| c|}
    \hline
   & $\sigma_1^2=1.2$ &  $\sigma_1^2=1.5$  & $\sigma_1^2=2$\\ 
 \hline
 R-QDA   & $0.205_{0.008}$ & $0.102_{0.007}$ & $0.0133_{0.002}$\\  
 \hline
 Imp-QDA   & $ 0.097_{0.007}$ & $0.001_{0.0008}$ & $0.000_{0.000} $\\
 \hline
  \end{tabular}
    \end{center} 
        \label{table1}
      \end{table}

As a second investigation, we study the impact of the difference between the noise variances $\sigma_0^2$ and $\sigma_1^2$. Table \ref{table1} reports the misclassification rate of the R-QDA classifier and our proposed classifier for fixed $\sigma_0^2$ and different values of $\sigma_1^2$. As can be seen, the improved QDA outperforms the classical R-QDA and exploits better the difference between $\sigma_0^2$ and $\sigma_1^2$. Such a finding is expected since as the difference $|\sigma_0^2-\sigma_1^2|$ increases, the classes become more distinguishable, resulting in better performances. The R-QDA is not able to leverage this difference well since it undergoes a higher estimation error in the covariance matrix, which affects its performance considerably.

\subsection{Real data}

For real data simulation, we use two datasets. The first one is the epileptic seizure detection dataset, which consists of recordings of brain activity using EEG signals. The dataset is composed of 5 classes with 2300 samples of dimension $p=178$ available for each class.  In our simulation, we consider the most confusing classes of this dataset for binary classification, namely class 4, which corresponds to recordings where the patients had their eyes closed and class 5, which corresponds recordings where the patients had their eyes open. This dataset is publicly available at \url{https://archive.ics.uci.edu/ml/datasets/Epileptic+Seizure+Recognition}.

The second dataset considered in this paper is the Gisette dataset composed of handwritten digits. The objective is to separate the highly confusing digits '4' and '9'. In our simulation, prior to applying the classification technique, a standard PCA is applied in order to reduce the observation size. This is a standard procedure in machine learning and is referred to as feature selection. We leverage all the data available in the training and validation data sets. A subset of these samples serves to build the classifier, while the remaining samples are used as a test data set to estimate the misclassification rate.    %We use all the available samples in the training and validation dataset. A subset of these samples is used as a training set, and the remaining samples are used as a test set to estimate the misclassification rate. More details of the procedure are provided hereafter. As for the estimation of the classification rate, we use the validation set since the test set labels are not available.
 This dataset is publicly available at \url{https://archive.ics.uci.edu/ml/datasets/Gisette}. We used the following protocol for the real dataset:
\begin{itemize}
\item Step 1: Let $q_0$ be the ratio between the total number of samples in class $\mathcal{C}_0$ to the total number of samples available in the full dataset. Denote by $n_{\rm Full}$ the total number of samples in the full dataset. Choose $n<n_{\rm Full}$ the number of training samples; set $n_0=\lfloor q_0 n \rfloor$, where $\lfloor . \rfloor$ is the floor function and $n_1=n-n_0$.
Take $n_i$ training samples belonging to class $\mathcal{C}_i$ randomly from the full dataset. The remaining samples are used as a test dataset in order to estimate the classification error.
\item Step 2: Using the training dataset, design the improved QDA classifier, as explained in section \ref{Improved_QDA}.
\item Step 3: Using the test dataset, estimate the true classification error for both classifiers. For the R-QDA classifier, a grid search over $\gamma \in \{ 10^{i/10}, i=-10:1:10\}$ is performed.
\item Step 4: Repeat steps 1--4, 250 times, and determine the average misclassification rate of both classifiers.
\end{itemize}

\begin{figure}
  \centering
  
  { \begin{tikzpicture}[scale=0.8,font=\small]
    \renewcommand{\axisdefaulttryminticks}{4}
    \pgfplotsset{every major grid/.style={densely dashed}}
    \tikzstyle{every axis y label}+=[yshift=-20pt]
    \tikzstyle{every axis x label}+=[yshift=5pt]
    legend style={fill=white,        at={(0.98,0.98)}, anchor=north east, font=\normalsize}
    \begin{axis}[
      xmin=500,
      xmax=2500,
      ymin=0.25,
      ymax=0.35,
      grid=major,
      scaled ticks=true,
   			xlabel={Sample size $n$},
   			ylabel={Misclassification rate}			
      ]
   \addplot[color=blue,mark size=1.4pt,mark=triangle,line width=0.6pt,error bars/.cd,y dir=both,y explicit, error bar style={mark size=1.5pt}]  plot coordinates{    
  (500,3.298244e-01)(1000,3.287694e-01)(1500,3.280097e-01)(2000,3.270962e-01)(2500,3.2640e-01)};\addlegendentry{R-QDA}           

\addplot[color=red,mark size=1.4pt,mark =square,line width=0.6pt,error bars/.cd,y dir=both,y explicit, error bar style={mark size=1.5pt}]  plot coordinates{ 
                  (500,2.680146e-01)(1000,2.678722e-01)(1500,2.6757677e-01)(2000,2.672577e-01)(2500,2.67010052e-01)
};\addlegendentry{ Imp-QDA }
           \end{axis}
  \end{tikzpicture}}
 
\centering
  \caption{Misclassification rate vs. sample size $n$ for $p=98$. Comparison between Improved QDA and R-QDA with elliptic seizure detection dataset. } 
  \label{fig_EEG}
  \end{figure}
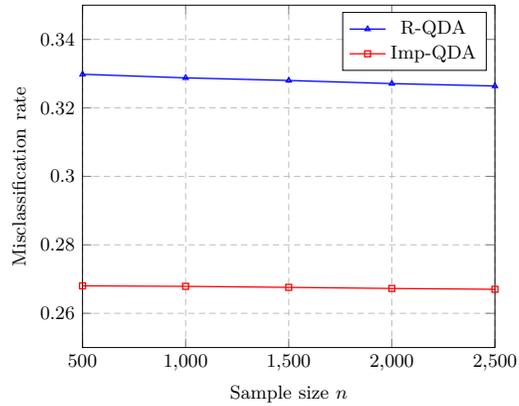
  
In \figref{fig_EEG}, we compare the performance of the proposed classifier with that of the R-QDA classifier when used for the elliptic seizure detection dataset. The misclassification rate of both classifiers is plotted versus the number of training samples. As observed, the proposed classifier outperforms the classical R-QDA significantly.

In \figref{fig_gisette}, the performance of the proposed classifier is assessed along with that of the classical R-QDA when the  Gisette dataset is considered. We note the important gain of the proposed Imp-QDA similarly.

%As observed in Table \ref{table2}, both classifiers provide good performance for large sample size $n$. However, for small values of the sample size, our proposed classifier clearly outperforms the R-QDA.

\begin{figure}
  \centering
  
  { \begin{tikzpicture}[scale=0.8,font=\small]
     \renewcommand{\axisdefaulttryminticks}{4}
    \pgfplotsset{every major grid/.style={densely dashed}}
    \tikzstyle{every axis y label}+=[yshift=-20pt]
    \tikzstyle{every axis x label}+=[yshift=5pt]
    legend style={fill=white,        at={(0.98,0.98)}, anchor=north east, font=\normalsize}
    \begin{axis}[
      xmin=400,
      xmax=700,
      ymin=0.04,
      ymax=0.09,
      grid=major,
      scaled ticks=true,
   			xlabel={Sample size $n$},
   			ylabel={Misclassification rate}			
      ]
   \addplot[color=blue,mark size=1.4pt,mark=triangle,line width=0.6pt,error bars/.cd,y dir=both,y explicit, error bar style={mark size=1.5pt}]  plot coordinates{    
 (400,8.027121e-02)(500,7.453385e-02)(600,7.085781e-02)(700,7.065714e-02)};\addlegendentry{R-QDA}           

\addplot[color=red,mark size=1.4pt,mark =square,line width=0.6pt,error bars/.cd,y dir=both,y explicit, error bar style={mark size=1.5pt}]  plot coordinates{ 
                 (400,6.267273e-02)(500,5.311231e-02)(600,4.802031e-02)(700,4.623175e-02)

};\addlegendentry{ Imp-QDA }
           \end{axis}
  \end{tikzpicture}}
 
\centering
  \caption{Misclassification rate vs. sample size $n$ for $p=98$. Comparison between Improved QDA and R-QDA with gisette dataset. } 
  \label{fig_gisette}
\end{figure}
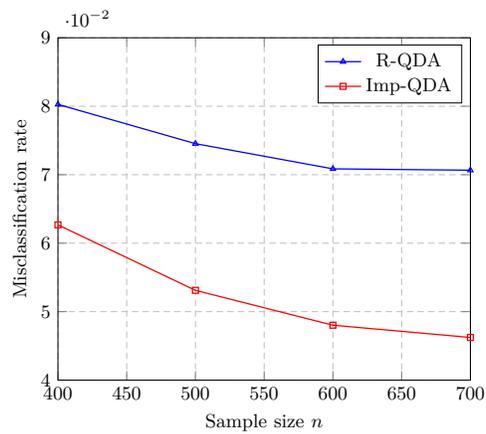

\begin{table}[H]
 \caption{Misclassification rate for the binary classification of class 4 and class 5 of the elliptic seizure detection dataset. Comparaison between the prposed classifier and classical techniques.}
 \begin{center}
    \begin{tabular}{ |c|c|c|c| c|}
    \hline
   & $n=200$ &  $n=1000$  & $n=2000$\\ 
 \hline
  Imp-QDA  & ${\bf 0.270}$ & ${\bf 0.268}$ & ${\bf 0.267}$\\  
 \hline
  R-QDA   & $ 0.337$ & $ 0.328$ & $0.327$\\
 \hline
  SVM (lin) & $ 0.485$ & $0.479$ & $0.474$\\
 \hline
  SVM (Poly$_3$) & $ 0.389$ & $0.299$ & $0.270$\\
 \hline
  KNN$_{1}$   & $ 0.395$ & $0.354$ & $0.335$\\
 \hline
  KNN$_{5}$   & $ 0.432$ & $0.388$ & $0.369$\\
 \hline
  \end{tabular}
    \end{center} 
      \label{table2}
      \end{table}
      
      As a final investigation, using the elliptic seizure dataset, we compare the performance of the proposed classifier with other standard classifiers such as support vector machine (SVM) and k-nearest neighbors (KNN). For SVM, linear and polynomial kernels are used, and for KNN, the number of neighbors used is 1 and 5.
 The Imp-QDA outperforms all these classifiers. Moreover, a larger training set is needed for these classifiers to approach the performance of Imp-QDA. For instance, polynomial SVM requires a training set of size $n=2000$ to achieve the performance of our classifier with a training set of size $n=200$.

\section{Conclusion}
\label{conclusion}
In this paper, we proposed an improved QDA classifier that is shown to outperform the classical R-QDA while requiring lower computation complexity. The proposed classifier is more suited for spiked covariance populations; a situation frequently met in EEG signal processing, detection, and econometrics applications.
The obtained results are very promising, opening the path to extend the analysis to more general covariance models such as a diagonal-plus-low-rank-perturbation model.

\appendix
\section{Proof of Proposition 3}
\label{proof_dist_conv}
Replacing $\hat \bC_0^{-1}$ and $\hat \bC_1^{-1}$ by their expressions, one can easily get
\begin{align*}
 Y_i(\hat\bC_0,\hat\bC_1)&=\left(\frac{1}{\sigma_1^2}-\frac{1}{\sigma_0^2}\right)\bz^T \bSig_i\bz+\nu_i+2\by_i^T\bz-\xi_i
\end{align*}
where $$\nu_i= \frac{1}{\sigma_1^2}\sum_{j=1}^{r_1} w_{j,1}( \tilde\bz_i^T\bu_{j,1})^2 -\frac{1}{\sigma_0^2} \sum_{j=1}^{r_0} w_{j,0} ( \tilde\bz_i^T  \bu_{j,0})^2.$$ Applying the trace lemma \cite{Couillet2011}, we have
$$
\frac{1}{p}\bz^T \bSig_i\bz-\frac{1}{p}\tr(\bSig_i)\asto 0,
$$
The assumed spiked model implies that $\frac{1}{p}\tr(\bSig_i)\longrightarrow \sigma_i^2$. Thus, 
$$
\left(\frac{1}{\sigma_1^2}-\frac{1}{\sigma_0^2}\right)\bz^T \bSig_i\bz-p\sigma_i^2\left(\frac{1}{\sigma_1^2}-\frac{1}{\sigma_0^2}\right)\asto 0,
$$
Using Slutsky's theorem, we can conclude that
$$
 Y_i(\hat\bC_0,\hat\bC_1) -\left( p\sigma_i^2\left(\frac{1}{\sigma_1^2}-\frac{1}{\sigma_0^2}\right)+\nu_i+2\by_i^T\bz-\xi_i\right)\dto 0$$
which concludes the proof.

\section{Proof of Theorem 4}
\label{proof_m_v_conv}
First, we recall the following results from \cite{Couillet2011} that will be used throughout the proof:
\begin{equation}
\begin{aligned}
&\bv_{j,i}^T\bu_{k,i}\bu_{k,i}^T\bv_{j,i}- a_{j,i}\delta_{j,k} \asto 0,\\ 
&\bv_{j,i}^T\bu_{k,\ell}\bu_{k,\ell}^T\bv_{j,i}- a_{k,\ell}(\bv_{j,i}^T\bv_{k,\ell})^2 \asto 0,\\ 
&\frac{1}{\|\bmu\|^2}\bmu^T\bu_{j,i}\bu_{j,i}^T\bmu- a_{j,i} b_{j,i} \asto 0,
\end{aligned}
\label{conver_a_b}
\end{equation}
where $\delta_{j,k}$ is Kronecker delta.
We shall also recall the following formula allowing to compute the variance and covariance of  quadratic forms of a multivariate normal distribution. If ${\bf z}\sim \mathcal{N}({\bf 0},{\bf I}_p)$ and ${\bf Q}$ is a deterministic $p\times p$ matrix, then: 
\begin{equation}
{\rm var}({\bf z}^{T}{\bf Q}{\bf z})=2\tr{\bf Q}^2
\label{eq:var}
\end{equation}
Let ${\bf Q}_1$ and ${\bf Q}_2$ be two deterministic $p\times p$ matrices, we have similarly:
\begin{equation}
{\rm cov}\left({\bf z}^{T}{\bf Q}_1{\bf z},{\bf z}^{T}{\bf Q}_2{\bf z}\right)=2\tr {\bf Q}_1{\bf Q}_2
\label{eq:cov}
\end{equation}
The mean of $\tilde Y_i$ is given by,
\begin{align*}
m_i(\bw)=p\sigma_i^2\left(\frac{1}{\sigma_1^2}-\frac{1}{\sigma_0^2}\right)+\tilde\nu_i-\xi_i,
\end{align*}
where $\tilde\nu_i=\sum_{j=1}^{r_1} \frac{w_{j,1}}{\sigma_1^2} \bu_{j,1}^T \bSig_i \bu_{j,1}-
\sum_{j=1}^{r_0}\frac{w_{j,0}}{\sigma_0^2}\bu_{j,0}^T \bSig_i\bu_{j,0}$.
Let us begin by treating the term $\xi_i$. First, we have
\begin{align*}
\xi_i&=-2\eta^{\rm Imp-QDA}+ (\bmu_i-\overline\bx_0)^T\hat\bC_0^{-1}(\bmu_i-\overline\bx_0)-(\bmu_i-\overline\bx_1)^T\hat\bC_1^{-1}(\bmu_i-\overline\bx_1),
\end{align*}
Noting that $\overline\bx_i=\bmu_i+\frac{1}{n_i}\bOmega_i\boldsymbol{1}_{n_i}$ where $\bOmega_i=\bSig_i^{\frac{1}{2}}\bZ_i$ and $\bZ_i \in \mathbb{R}^{p\times n_i}$ with entries i.i.d. $\mathcal{N}(0,1)$, we can write
\begin{align}
\xi_0&=-2\eta^{\rm Imp-QDA}+ \frac{1}{n_0^2}\boldsymbol{1}_{n_0}^T\bOmega_0^T\hat\bC_0^{-1}\bOmega_0\boldsymbol{1}_{n_0}-(\bmu-\frac{1}{n_1}\bOmega_1\boldsymbol{1}_{n_1})^T\hat\bC_1^{-1}(\bmu-\frac{1}{n_1}\bOmega_1\boldsymbol{1}_{n_1}),
\label{xi_worked} 
\end{align}
Let $\bar\bz_i=\frac{1}{\sqrt{n_i}} \bOmega_i\boldsymbol{1}_{n_i}$. The sample covariance matrix $\hat\bSig_i$ is independent of $\bar\bz_i$ \cite{Zollanvari2015}, which means also that $\bar\bz_i$ is independent of the eigenvectors of $\hat\bSig_i$ that appears in $\hat\bC_i^{-1}$. Thus, we have
\begin{align}
\frac{1}{n_i}\bmu^T\hat\bC_i^{-1}\bOmega_i\boldsymbol{1}_{n_i}\asto 0,\label{conv0}\end{align}
\begin{align}
\frac{1}{n_i^2}\boldsymbol{1}_{n_i}^T\bOmega_i^T\hat\bC_i^{-1}\bOmega_i\boldsymbol{1}_{n_i}-\frac{1}{n_i}\tr \bSig_i\hat\bC_i^{-1}\asto 0,\label{conv11}
\end{align}
Replacing $\bSig_i$ and $\hat\bC_i^{-1}$ by their expressions in \eqref{conv11} and using the fact that $r$ is finite, we have $\frac{1}{n_i}\tr \bSig_i\hat\bC_i^{-1}-{c_i}\asto 0$. Thus,
\begin{align}
\frac{1}{n_i^2}\boldsymbol{1}_{n_i}^T\bOmega_i^T\hat\bC_i^{-1}\bOmega_i\boldsymbol{1}_{n_i}-c_i\asto 0,\label{conv1}
\end{align}
On the other hand, replacing $\hat\bC_i^{-1}$ by its expression and applying \eqref{conver_a_b}, one can get easily
 \begin{equation}
\begin{aligned}
\bmu^T\hat\bC_i^{-1}\bmu - \frac{\|\bmu\|^2}{\sigma_i^2}\left(1+ \sum_{j=1}^{r_i} w_{j,i} a_{j,i}b_{j,i}\right)\asto 0.
  \end{aligned}
  \label{conv2}
\end{equation}
Combining \eqref{xi_worked}, \eqref{conv0}, \eqref{conv1} and \eqref{conv2}, we get
\begin{equation}
\xi_0 - \left[-2\eta+c_0-c_1- \frac{\|\bmu\|^2}{\sigma_1^2}\left(1+ \sum_{j=1}^{r_1} w_{j,1} a_{j,1}b_{j,1}\right)\right]\asto 0
\label{xi_0_conv}
\end{equation}
Applying the same approach, one can prove that
\begin{equation}
\xi_1 - \left[-2\eta+c_0-c_1+ \frac{\|\bmu\|^2}{\sigma_0^2} \left(1+\sum_{j=1}^{r_0} w_{j,0} a_{j,0}b_{j,0}\right)\right]\asto 0
\label{xi_1_conv}
\end{equation}
Moreover, replacing $\bSig_0$ and $\bSig_1$ by their expressions and applying \eqref{conver_a_b}, we have
\begin{equation}
\begin{aligned}
&\nu_0 - \left[ \frac{\sigma_0^2}{\sigma_1^2}\sum_{j=1}^{r_1} w_{j,1} \phi_{j,1}- \sum_{j=1}^{r_0} w_{j,0} (1+a_{j,0}\lambda_{j,0})\right]\asto 0\\
&\nu_1 - \left[ \sum_{j=1}^{r_1} w_{j,1} (1+a_{j,1}\lambda_{j,1})- \frac{\sigma_1^2}{\sigma_0^2}\sum_{j=1}^{r_0} w_{j,0} \phi_{j,0}\right]\asto 0
\end{aligned}
\label{nu_i_conv}
\end{equation}
Combining \eqref{xi_0_conv}, \eqref{xi_1_conv} and \eqref{nu_i_conv}, we obtain the first convergence result of Theorem 4.
Now, we address the convergence of $v_i(\bw)$. We will treat the term $v_1(\bw)$ only. The convergence of $v_0(\bw)$ can be obtained by applying the same steps.
Since $\bz$ is Gaussian, it is not hard to see that $\nu_1$ and $\by_1^T\bz$ are uncorrelated. Thus, we have
$$
v_1(\bw)= \var(\nu_1)+4 \var(\by_1^T\bz),
$$
Let us begin by $\var(\nu_i)$ which can be written as
\begin{align*}
 \var(\nu_1)&=\frac{1}{\sigma_1^4}\sum_{j=1}^{r_1}w_{j,1}^2 \var\left( \left[\bu_{j,1}^T\bSig_1^{\frac{1}{2}}\bz\right]^2\right)+\frac{1}{\sigma_0^4}\sum_{j=1}^{r_0}w_{j,0}^2\var\left( \left[\bu_{j,0}^T\bSig_1^{\frac{1}{2}}\bz\right]^2\right)\\
 &-\sum_{\ell=1}^{r_1}\sum_{j=1}^{r_0}\frac{2w_{\ell,1}w_{j,0}}{\sigma_1^2\sigma_0^2}{\rm cov}\!\left(\!\left[\bu_{\ell,1}^T\bSig_1^{\frac{1}{2}}\bz\right]^2\!,\left[\bu_{j,0}^T\bSig_1^{\frac{1}{2}}\bz\right]^2\!\right)
\end{align*}
where we have used in the last equation the fact that $\bu_{j,k}^T\bSig_1^{\frac{1}{2}}\bz$ is independent of $\bu_{j',k}^T\bSig_1^{\frac{1}{2}}\bz$ for $j'\neq j,\ k=0,1$, a fact that follows from  the orthogonality between eigenvectors $\bu_{j,k}$ and $\bu_{j',k}$.

Using \eqref{eq:var}, we obtain
\begin{align*}
\var\left( \left[\bu_{j,1}^T\bSig_1^{\frac{1}{2}}\bz\right]^2\right)&=2\tr \left[ \bSig_1^{\frac{1}{2}} \bu_{j,1}\bu_{j,1}^T\bSig_1^{\frac{1}{2}}\right]^2=2 \left[\bu_{j,1}^T\bSig_1\bu_{j,1}\right]^2
\end{align*}
Replacing $\bSig_1$ by its expression and applying \eqref{conver_a_b}, we can easily show that
$$
\left[\bu_{j,1}^T\bSig_i\bu_{j,1}\right]^2-\sigma_1^4(1+\lambda_{j,1}a_{j,1})^2\asto 0.
$$
Thus, we have 
\begin{equation}
\var\left( \left[\bu_{j,1}^T\bSig_1^{\frac{1}{2}}\bz\right]^2\right)-2\sigma_1^4(1+\lambda_{j,1}a_{j,1})^2\asto 0.
\label{var1}
\end{equation}
Similarly, we have
\begin{align*}
\var\left( \left[\bu_{j,0}^T\bSig_1^{\frac{1}{2}}\bz\right]^2\right)=2 \left[\bu_{j,0}^T\bSig_1\bu_{j,0}\right]^2
\end{align*}
Applying \eqref{conver_a_b} again, we can easily show that
$$
\left[\bu_{j,0}^T\bSig_1\bu_{j,0}\right]^2-\sigma_1^4\phi_{j,0}^2\asto 0.
$$
Thus, we have 
\begin{equation}
\var\left( \left[\bu_{j,0}^T\bSig_1^{\frac{1}{2}}\bz\right]^2\right)-2\sigma_1^4\phi_{j,0}^2\asto 0.
\label{var2}
\end{equation}
Using now \eqref{eq:cov}, we obtain:
\begin{align*}
{\rm cov}\left(\left[\bu_{\ell,1}^T\bSig_1^{\frac{1}{2}}\bz\right]^2,\left[\bu_{j,0}^T\bSig_1^{\frac{1}{2}}\bz\right]^2\right)=2 \left[\bu_{j,0}^T\bSig_1\bu_{\ell,1}\right]^2 \end{align*}
Applying \eqref{conver_a_b} again, we obtain
\begin{equation}
\left[\bu_{j,0}^T\bSig_1\bu_{\ell,1}\right]^2-\sigma_1^4a_{\ell,1}a_{j,0}(1+\lambda_{\ell,1})^2(\bv_{j,0}^T\bv_{\ell,1})^2\asto 0.
\label{covv}
\end{equation}
Combining \eqref{var1}, \eqref{var2} and \eqref{covv}, we obtain
\begin{equation}
 \var(\nu_1)-\overline v_{1,1}\asto 0,
 \label{var_nui}
\end{equation}
where
\begin{align*}
\overline v_{1,1}&=2\sum_{j=1}^{r_1}w_{j,1}^2 (1+\lambda_{j,1}a_{j,1})^2+2\frac{\sigma_1^4}{\sigma_0^4}\sum_{j=1}^{r_0}w_{j,0}^2\phi_{j,0}^2
-4\frac{\sigma_1^2}{\sigma_0^2}\sum_{\ell=1}^{r_1}\sum_{j=1}^{r_0}w_{\ell,1}w_{j,0}a_{\ell,1}a_{j,0}(1+\lambda_{\ell,1})^2(\bv_{j,0}^T\bv_{\ell,1})^2. \end{align*}
 It remains now to deal with the term $\var(\by_1^T\bz)$, which can be written as
\begin{align*}
\var(\by_1^T\bz)&=\mathbb{E} \by_1^T\bz\bz^T\by_1= \by_1^T\by_1\\
&= \left(-\hat\bC_1^{-1}\frac{\bOmega_1\boldsymbol{1}_{n_1}}{n_1}+ \hat\bC_0^{-1}(\bmu+\frac{\bOmega_0\boldsymbol{1}_{n_0}}{n_0})\right)^T\bSig_1\left(-\hat\bC_1^{-1}\frac{\bOmega_1\boldsymbol{1}_{n_1}}{n_1}+ \hat\bC_0^{-1}(\bmu+\frac{\bOmega_0\boldsymbol{1}_{n_0}}{n_0})\right)
 \end{align*}
Using the same arguments as in \eqref{conv0}, the following convergence holds
\begin{align*}
\frac{1}{n_i}\bmu^T\hat\bC_i^{-1}\bSig_1\hat\bC_i^{-1}\bOmega_i\boldsymbol{1}_{n_i}\asto 0,\end{align*}
The independence of $\bOmega_1$ and $\bOmega_0$ yields
\begin{align*}
\frac{1}{n_0n_1}\boldsymbol{1}_{n_0}^T\bOmega_0^T\hat\bC_0^{-1}\bSig_1\hat\bC_1^{-1}\bOmega_1\boldsymbol{1}_{n_1}\asto 0,\end{align*}
while the trace lemma \cite[Theorem 3.4]{Couillet2011} yields:
\begin{align*}
\frac{1}{n_i^2}\boldsymbol{1}_{n_i}^T\bOmega_i^T\hat\bC_i^{-1}\bSig_1\hat\bC_i^{-1}\bOmega_i\boldsymbol{1}_{n_i}-\frac{1}{n_i}\tr \bSig_1\hat\bC_i^{-1}\bSig_1\hat\bC_i^{-1} \asto 0,\end{align*}
Replacing $\bSig_i$ and $\hat\bC_i^{-1}$ by their expressions using the fact that $r$ is finite, we have $\frac{1}{n_1}\tr \bSig_1\hat\bC_1^{-1}\bSig_1\hat\bC_1^{-1}-{c_1}\asto 0$ and $\frac{1}{n_0}\tr \bSig_1\hat\bC_0^{-1}\bSig_1\hat\bC_0^{-1}-{c_0}\frac{\sigma_1^4}{\sigma_0^4}\asto 0$ .
Thus, 
\begin{align*}
\frac{1}{n_i^2}\boldsymbol{1}_{n_i}^T\bOmega_i^T\hat\bC_i^{-1}\bSig_1\hat\bC_i^{-1}\bOmega_i\boldsymbol{1}_{n_i}-c_i\frac{\sigma_1^4}{\sigma_i^4} \asto 0,\end{align*}
It remains to deal with the term $\bmu^T\hat\bC_0^{-1}\bSig_1\hat\bC_0^{-1}\bmu$.
Applying \eqref{conver_a_b}, one can obtain after standard calculations:
\begin{align}
\bmu^T\hat\bC_0^{-1}\bSig_1\hat\bC_0^{-1}\bmu-\overline v_{0,1}\asto 0
\end{align}
where $\overline v_{0,1}$ is given by
{\small
\begin{align*}
\overline v_{0,1}&=\frac{\|\bmu\|^2\sigma_1^2}{\sigma_0^4}\left[1+\sum_{\ell=1}^{r_1}\lambda_{\ell,1}b_{\ell,1}+2\sum_{j=1}^{r_0}w_{j,0}a_{j,0}b_{j,0}+2\!\sum_{j=1}^{r_0}\sum_{\ell=1}^{r_1}\!w_{j,0}\lambda_{\ell,1}a_{j,0}\!\sqrt{b_{j,0}b_{\ell,1}}\bv_{j,0}^T\bv_{\ell,1}\!+\!\sum_{j=1}^{r_0}w_{j,0}^2a_{j,0}b_{j,0}\right.\\&\left. +\!\sum_{k,j=1}^{r_0}\sum_{\ell=1}^{r_1}w_{j,0}w_{k,0}\lambda_{\ell,1}a_{j,0}a_{k,0}\sqrt{b_{j,0}b_{k,0}}\bv_{j,0}^T\bv_{\ell,1}\bv_{k,0}^T\bv_{\ell,1}
\!\right]
\end{align*}}
Putting all these results together and writing the result in vector form yields the convergence of the variance $v_1(\bw)$.

\section{Proof of Proposition 5}
\label{proof_optimal_w}
Using the change of variables $\tilde \bw=  \bE^{\frac{1}{2}}\bw+\bE^{-\frac{1}{2}}\be$, our optimization problem can be written as,
\begin{equation}
 \max_{\tilde \bw} \ \  \left|f(\tilde \bw)\right|,
\end{equation}
where
$$
f(\tilde \bw)=\frac{ \bg^T \bE^{-\frac{1}{2}} \tilde\bw+d}{2\sqrt{\|\tilde\bw\|^2+ b-\be^T\bE^{-1}\be}}
$$
with $d= \beta_0+\beta_1- \bg^T\bE^{-1}\be$.
If at optimality we have $f(\tilde\bw^\star)<0$, then $\max_{\tilde \bw} \left|f(\tilde \bw)\right|=\max_{\tilde \bw}  -f(\tilde \bw)$. Moroever, if $f(\tilde\bw^\star)\geq0$, then $\max_{\tilde \bw} \left|f(\tilde \bw)\right|=\max_{\tilde \bw}  f(\tilde \bw)$. Clearly, we can conclude that $$\left|f(\tilde\bw^\star)\right|=\max\left\{\max_{\tilde \bw}  -f(\tilde \bw),\max_{\tilde \bw}  f(\tilde \bw)\right\}$$
It remains now to solve these two problems $\mathcal{P}_1:\max_{\tilde \bw}  f(\tilde \bw)$ and $\mathcal{P}_2:\max_{\tilde \bw}  -f(\tilde \bw)$.
Let us begin by solving $\mathcal{P}_1:\max_{\tilde \bw}  f(\tilde \bw)$, which can be reformulated, by separating the optimization over the norm and the direction of $\tilde\bw$, as
\begin{equation}
\max_{\theta_1} \  \max_{\|\bar \bw_1\|=1} \ \  \frac{ \theta_1\bg^T \bE^{-\frac{1}{2}} \bar\bw_1+d}{2\sqrt{\theta_1^2+ b-\be^T\bE^{-1}\be}},
\end{equation}
Clearly, the optimal direction is $\bar \bw_1^\star=\frac{ \bE^{-\frac{1}{2}} \bg}{\| \bE^{-\frac{1}{2}} \bg \|}$, thus it remains to solve the following problem
 \begin{equation}
\max_{\theta_1\geq 0}  \ \  \frac{ \theta_1\sqrt{\bg^T \bE^{-1} \bg}+d}{2\sqrt{\theta_1^2+ b-\be^T\bE^{-1}\be}},
\end{equation}
If $d>0$, function $\theta\mapsto \frac{ \theta_1\sqrt{\bg^T \bE^{-1} \bg}+d}{2\sqrt{\theta_1^2+ b-\be^T\bE^{-1}\be}}$ is maximized when $\theta=\theta_1^\star$ with  
$$
\theta_1^\star=\frac{\sqrt{\bg^T \bE^{-1} \bg}(b-\be^T\bE^{-1}\be)}{d} 
$$
On the other hand, if $d<0$, $\theta\mapsto \frac{ \theta_1\sqrt{\bg^T \bE^{-1} \bg}+d}{2\sqrt{\theta_1^2+ b-\be^T\bE^{-1}\be}}$ is strictly increasing and tends to $\frac{1}{2}  \sqrt{\bg^T \bE^{-1} \bg}$ when $\theta\to\infty$. 
We thus conclude 
$$
\sup_{\tilde{\bf w}} f(\tilde{\bf w}) = \begin{cases} \ \ \frac{1}{2}\sqrt{{\bf g}^{T}{\bf E}^{-1}{\bf g}+\frac{d^2}{b-{\bf e}^{T}{\bf E}^{-1}{\bf e}}} \ \ & {\rm if}  \ \ d>0 \\  \frac{1}{2}\sqrt{{\bf g}^{T}{\bf E}^{-1}{\bf g}}  \ \ \ \ \ \ &{\rm otherwise}
\end{cases}
$$
Similarly, following the same analysis, we obtain:
$$
\sup_{\tilde{\bf w}} -f(\tilde{\bf w}) = \begin{cases} \ \ \frac{1}{2}\sqrt{{\bf g}^{T}{\bf E}^{-1}{\bf g}+\frac{d^2}{b-{\bf e}^{T}{\bf E}^{-1}{\bf e}}}  \ \ &{\rm if}  \ \ d<0 \\  \frac{1}{2}\sqrt{{\bf g}^{T}{\bf E}^{-1}{\bf g}}  \ \ \ \ \ \ &{\rm otherwise}
\end{cases}
$$
Comparing the optimal objective values, at optimum we have:
\begin{align*}
\tilde \bw^\star= \frac{b-\be^T\bE^{-1}\be}{|d|}\bE^{-\frac{1}{2}} \bg \ \ \ {\rm if} \ \  d\neq0 
\end{align*}
Going back to $\bw$, we ultimately find that the optimal $\bw^\star$ has the following closed-form expression
$$
\bw^\star= \bE^{-\frac{1}{2}} \tilde\bw^\star-\bE^{-1}\be=\bE^{-1}\left[\frac{b-\be^T\bE^{-1}\be}{|d|}\bg-\be\right].
$$

\bibliographystyle{IEEEbib}
{\bibliography{IEEEabrv,IEEEconf,ref}}

\begin{thebibliography}{10}

\bibitem{Elkhalil2017b}
Khalil Elkhalil, Abla Kammoun, Romain Couillet, Tareq~Y. Al-Naffouri, and
  Mohamed-Slim Alouini,
\newblock ``{A Large Dimensional Study of Regularized Discriminant Analysis
  Classifiers},''
\newblock {\em https://arxiv.org/abs/1711.00382}, 2017.

\bibitem{Friston}
K.~J. Friston, C.~D. Frith, P.~F. Liddle, and R.~S.~J. Frackowiak,
\newblock ``Functional connectivity: The principal-component analysis of large
  (pet) data sets,''
\newblock {\em J. Cereb. Blood Flow Metab.}, vol. 13, pp. 5--14, 1993.

\bibitem{Ramsay}
J.~Ramsay and B.~W. Silverman,
\newblock {\em Functional Data Analysis},
\newblock New York: Springer, 1997.

\bibitem{Preisendorfer}
R.~Preisendorfer,
\newblock {\em Principal Component Analysis in Meteorology and Oceanography},
\newblock Amesterdam, Holland: Elsevier, 1988.

\bibitem{Friedman1989}
J.~H. Friedman,
\newblock ``Regularized discriminant analysis,''
\newblock {\em Journal of the American Statistical Association}, vol. 84, no.
  405, pp. 165--175, 1989.

\bibitem{ye2006regularized}
Jieping Ye and Tie Wang,
\newblock ``Regularized discriminant analysis for high dimensional, low sample
  size data,''
\newblock in {\em Proceedings of the 12th ACM SIGKDD international conference
  on Knowledge discovery and data mining}, 2006, pp. 454--463.

\bibitem{xiong2018mathcal}
Haoyi Xiong, Wei Cheng, Jiang Bian, Wenqing Hu, Zeyi Sun, and Zhishan Guo,
\newblock ``Dbsda: Lowering the bound of misclassification rate for sparse
  linear discriminant analysis via model debiasing,''
\newblock {\em IEEE transactions on neural networks and learning systems}, vol.
  30, no. 3, pp. 707--717, 2018.

\bibitem{bian2017early}
Jiang Bian, Laura~E Barnes, Guanling Chen, and Haoyi Xiong,
\newblock ``Early detection of diseases using electronic health records data
  and covariance-regularized linear discriminant analysis,''
\newblock in {\em 2017 IEEE EMBS International Conference on Biomedical \&
  Health Informatics (BHI)}, 2017, pp. 457--460.

\bibitem{Zollanvari2015}
A.~Zollanvari and E.~R. Dougherty,
\newblock ``Generalized consistent error estimator of linear discriminant
  analysis,''
\newblock {\em IEEE Transactions on Signal Processing}, vol. 63, no. 11, pp.
  2804--2814, 2015.

\bibitem{ZHAO19861}
L.C. Zhao, P.R. Krishnaiah, and Z.D. Bai,
\newblock ``On detection of the number of signals in presence of white noise,''
\newblock {\em Journal of Multivariate Analysis}, vol. 20, no. 1, pp. 1-- 25,
  1986.

\bibitem{Davidson2009}
D.~J. Davidson,
\newblock ``Functional mixed-effect models for electrophysiological
  responses,''
\newblock {\em Neurophysiology}, vol. 41, no. 1, pp. 71--79, Feb 2009.

\bibitem{FAZLI20112100}
S.~Fazli, M.~Dan{\"{o}}czy, J.~Schelldorfer, and K.-R. M\"{u}ller,
\newblock ``l1-penalized linear mixed-effects models for high dimensional data
  with application to bci,''
\newblock {\em NeuroImage}, vol. 56, no. 4, pp. 2100 -- 2108, 2011.

\bibitem{Passemier2017}
D.~Passemier, Z.~Li, and J.~Yao,
\newblock ``On estimation of the noise variance in high dimensional
  probabilistic principal component analysis,''
\newblock {\em Journal of the Royal Statistical Society: Series B (Statistical
  Methodology)}, vol. 79, no. 1, pp. 51--67, 2017.

\bibitem{KRITCHMAN2008}
S.~Kritchman and B.~Nadler,
\newblock ``Determining the number of components in a factor model from limited
  noisy data,''
\newblock {\em Chemometrics and Intelligent Laboratory Systems}, vol. 94, no.
  1, pp. 19 -- 32, 2008.

\bibitem{yang}
L.~Yang, M.~R. McKay, and R.~Couillet,
\newblock ``{High-dimensional MVDR beamforming: optimized solutions based on
  spiked random matrix models},''
\newblock {\em IEEE Transactions on Signal Processing}, vol. 66, no. 7, Apr.
  2018.

\bibitem{donoho}
D.~L. Donoho, M.~Gavish, and I.~M. Johnstone,
\newblock ``Optimal shrinkage of eigenvalues in the spiked covariance model,''
  2017.

\bibitem{Sifaou2018a}
H.~Sifaou, A.~Kammoun, and M.~Alouini,
\newblock ``Improved {LDA} classifier based on spiked models,''
\newblock in {\em 2018 IEEE 19th International Workshop on Signal Processing
  Advances in Wireless Communications (SPAWC)}, 2018, pp. 1--5.

\bibitem{JMLR:v21:19-428}
H.~Sifaou, A.~Kammoun, and M.-S. Alouini,
\newblock ``High-dimensional linear discriminant analysis classifier for spiked
  covariance model,''
\newblock {\em Journal of Machine Learning Research}, vol. 21, no. 112, pp.
  1--24, 2020.

\bibitem{hastie01statisticallearning}
T.~Hastie, R.~Tibshirani, and J.~Friedman,
\newblock {\em The Elements of Statistical Learning},
\newblock Springer, 2001.

\bibitem{Johnstone2009}
I.~M. Johnstone and A.~Y. Lu,
\newblock ``On consistency and sparsity for principal components analysis in
  high dimensions,''
\newblock {\em J. Amer. Stat. Assoc.}, vol. 104, no. 486, pp. 682--693, 2009.

\bibitem{Ulfarsson2008}
M.~O. Ulfarsson and V.~Solo,
\newblock ``Dimension estimation in noisy pca with sure and random matrix
  theory,''
\newblock {\em IEEE Transactions on Signal Processing}, vol. 56, no. 12, pp.
  5804--5816, Dec. 2008.

\bibitem{Daniels2001}
M.~J. Daniels and R.~E. Kass,
\newblock ``Shrinkage estimators for covariance matrices,''
\newblock {\em Biometrics}, vol. 57, no. 4, pp. 1173--1184, 2001.

\bibitem{baik2005}
J.~Baik, G.~Ben Arous, and S.~P\'ech\'e,
\newblock ``Phase transition of the largest eigenvalue for nonnull complex
  sample covariance matrices,''
\newblock {\em Ann. Probab.}, vol. 33, no. 5, pp. 1643--1697, Sept. 2005.

\bibitem{Couillet2011}
T.~Couillet and M.~Debbah,
\newblock {\em Random Matrix Methods for Wireless Communications},
\newblock U.K., Cambridge: Cambridge Univ. Press, 2011.

\bibitem{FISHER1936}
R.~A. Fisher,
\newblock ``The use of multiple measurements in taxonomic problems,''
\newblock {\em Annals of Eugenics}, vol. 7, no. 2, pp. 179--188, 1936.

\end{thebibliography}

\end{document}